\newtheorem{theorem}{Theorem}
\newtheorem{lemma}{Lemma}
\newtheorem{definition}{Definition}
\newtheorem{proposition}{Proposition}
\numberwithin{equation}{section}
\DeclareMathOperator{\sign}{sgn}
\begin{document}

\title{Optimal Approximation of Zonoids and Uniform Approximation by Shallow Neural Networks
}

\author{Jonathan W. Siegel \\
 Department of Mathematics\\
 Texas A\&M University\\
 College Station, TX 77843 \\
 \texttt{jwsiegel@tamu.edu} \\
}

\maketitle

\begin{abstract}
    We study the following two related problems. The first is to determine to what error an arbitrary zonoid in $\mathbb{R}^{d+1}$ can be approximated in the Hausdorff distance by a sum of $n$ line segments. The second is to determine optimal approximation rates in the uniform norm for shallow ReLU$^k$ neural networks on their variation spaces. The first of these problems has been solved for $d\neq 2,3$, but when $d=2,3$ a logarithmic gap between the best upper and lower bounds remains. We close this gap, which completes the solution in all dimensions. For the second problem, our techniques significantly improve upon existing approximation rates when $k\geq 1$, and enable uniform approximation of both the target function and its derivatives.\\\\
    \textbf{Keywords}: Discrepancy Theory, Neural Networks, Approximation Rates, Variation Space\\
    \textbf{Mathematics Subject Classification}: 41A25, 41A46, 52A21, 68T07
\end{abstract}

\section{Introduction}
A (centered) zonotope in $\mathbb{R}^{d+1}$ (so that the sphere $S^d\subset \mathbb{R}^{d+1}$ is of dimension $d$) is a convex polytope $P$ which is the Minkowski sum of finitely many centered line segments, i.e., a body of the form
\begin{equation}\label{zonotope-P}
    P = \{x_1v_1 + \cdots + x_nv_n,~x_i\in[-1,1]\}
\end{equation}
for some collection of vectors $v_i\in \mathbb{R}^{d+1}$. The number $n$ is the number of summands of the zonotope. A zonoid is a convex body which is a limit of zonotopes in the Hausdorff metric. 

We consider the following problem: Given an arbitrary zonoid $Z$, how accurately can $Z$ be approximated by a polytope $P$ with $n$-summands? Here accuracy $\epsilon$ is taken to mean that $Z\subset P \subset (1+\epsilon)Z$. 

This problem has been studied by a variety of authors (see for instance \cite{bourgain1989approximation,bourgain1988distribution,linhart1989approximation,bourgain1993approximating,betke1983estimating,joos2023isoperimetric}). Of particular interest is the case when $Z = B^{d+1}$ is the Euclidean unit ball. In this case the problem has an equivalent formulation as (see \cite{betke1983estimating}): how many directions $v_1,...,v_n\in S^d$ are required to estimate the surface area of a convex body in $\mathbb{R}^{d+1}$ from the volumes of its $d$-dimensional projections orthogonal to each $v_i$?

In \cite{bourgain1989approximation}, it was shown using spherical harmonics that with $n$ summands and $Z = B^{d+1}$ the best error one can achieve is lower bounded by
\begin{equation}\label{bourgain-lower-bound}
    \epsilon(n) \geq c(d)n^{-\frac{1}{2}-\frac{3}{2d}}.
\end{equation}
When $d=2,3$, this bound was matched up to logarithmic factors in \cite{bourgain1988distribution}, specifically it was shown that for general zonoids $Z$, we have
\begin{equation}
    \epsilon(n) \leq C(d)\begin{cases}
        n^{-\frac{1}{2}-\frac{3}{2d}} \sqrt{\log(n)} & d = 2\\
        n^{-\frac{1}{2}-\frac{3}{2d}} \log(n)^{3/2} & d = 3.
    \end{cases}
\end{equation}
For larger values of $d$ the result in \cite{bourgain1988distribution} gives the worse upper bound of
\begin{equation}
    \epsilon(n) \leq C(d)n^{-\frac{1}{2}-\frac{1}{d-1}} \sqrt{\log(n)}.
\end{equation}
In \cite{bourgain1993approximating} (see also \cite{linhart1989approximation}) it was shown that these bounds can be attained using summands of equal length if $Z = B^{d+1}$. 

The picture was nearly completed in \cite{matouvsek1996improved}, where it was shown that we have
\begin{equation}\label{matousek-result}
    \epsilon(n) \leq C(d)\begin{cases}
        n^{-\frac{1}{2}-\frac{3}{2d}} \sqrt{\log(n)} & d = 2,3\\
        n^{-\frac{1}{2}-\frac{3}{2d}} & d \geq 4.
    \end{cases}
\end{equation}
Moreover, it was shown that the upper bound when $d\geq 4$ can be achieved using summands of equal length for all zonoids $Z$.

In this work, we remove the logarithmic factors in \eqref{matousek-result} when $d=2,3$, i.e., we prove that
\begin{equation}\label{our-result-zonotope}
    \epsilon(n) \leq C(d) n^{-\frac{1}{2}-\frac{3}{2d}}
\end{equation}
for all $d$, and thus provide upper bounds exactly matching (up to a constant factor) the lower bound \eqref{bourgain-lower-bound}. To formulate these results, we pass to the dual setting (see \cite{bourgain1989approximation,matouvsek1996improved}). A symmetric convex body $Z$ is a zonoid iff
\begin{equation}
    \|x\|_{Z^*} := \sup_{z\in Z}x\cdot z = \int_{S^d} |x\cdot y|d\tau(y)
\end{equation}
for a positive measure $\tau$ on $S^d$. The body $Z$ is a zonotope with $n$-summands iff $\tau$ is supported on $n$ points. Since our error measure is scale invariant, we may assume that $\tau$ is a probability distribution. Given these considerations, the bound \eqref{our-result-zonotope} follows from the following result.

\begin{theorem}\label{optimal-zonoid-approximation-theorem}
    There exists a constant $C = C(d)$ such that for any probability measure $\tau$ on the sphere $S^d$, there exists a probability measure $\tau'$ on $S^d$ which is supported on $n$ points, such that
    \begin{equation}
        \sup_{x\in S^d} \left|\int_{S^d}|x\cdot y|d\tau(y) - \int_{S^d}|x\cdot y|d\tau'(y)\right| \leq Cn^{-\frac{1}{2}-\frac{3}{2d}}.
    \end{equation}
\end{theorem}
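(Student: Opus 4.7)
\medskip

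\noindent\textbf{Proof proposal.} My plan is to refine the iterative halving strategy of Matoušek \cite{matouvsek1996improved} by replacing his partial-coloring step with a subgaussian vector-balancing argument, thereby eliminating the $\sqrt{\log n}$ factor that appears in dimensions $d = 2, 3$. First, I would reduce to the case where $\tau$ is an equally weighted atomic probability measure $\mu_0$ supported on $N = 2^L n$ points for a suitable integer $L$. This can be done by drawing $N$ i.i.d.\ samples from $\tau$ and invoking standard uniform concentration over the test class $\mathcal{F} = \{y \mapsto |x\cdot y| : x \in S^d\}$, whose metric entropy in $L^\infty$ grows like $d\log(1/\epsilon)$. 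Choosing $N$ polynomially large in $n$ (e.g.\ $N = n^{1+3/d}$) ensures that the sampling error is already well below the target rate $n^{-1/2 - 3/(2d)}$.

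The core of the argument is an iterative halving procedure, applied $L$ times to pass from $\mu_0$ down to a measure on $n$ atoms. At each stage one pairs the current atoms geometrically, via a hierarchical partition of $S^d$ so that each pair has small diameter, and then chooses signs $\epsilon_i \in \{\pm 1\}$ to retain one element of each pair with doubled weight. The resulting pointwise error at a test point $x \in S^d$ takes the form $\sum_i \epsilon_i v_i(x)$, where $v_i(x) = \tfrac{1}{2}\bigl(|x\cdot y_i| - |x\cdot y_i'|\bigr)$ and $(y_i, y_i')$ are the paired atoms. The task is therefore to choose signs so that this signed sum, viewed as a function of $x\in S^d$, is small in the uniform norm. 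To accomplish this without incurring a union-bound logarithm, I would invoke Banaszczyk's theorem on vector balancing in convex bodies, applied with respect to an $\epsilon$-net of $S^d$ and a constraint body whose Gaussian measure at the appropriate scale is at least $1/2$. Iterating this across the $L$ halving stages and summing the errors geometrically would yield the claimed rate $C n^{-1/2 - 3/(2d)}$.

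The main technical obstacle is calibrating the vector-balancing step to the actual regularity of $f_x(y) = |x\cdot y|$ rather than its crude Lipschitz bound. The exponent $3/(2d)$ (instead of $1/(2d)$) reflects the fact that $t \mapsto |t|$ has Sobolev regularity of order $3/2$, visible through the decay $c_k \sim k^{-(d+3)/2}$ of its Gegenbauer/zonal harmonic coefficients on $S^d$. Merely bounding $\|v_i\|_\infty$ by the pair diameter would only produce the exponent $1/(2d)$. To extract the full $3/(2d)$, the chaining must control the contributions of the $v_i$'s across frequency bands — or equivalently, exploit a second-order cancellation when the paired atoms $y_i, y_i'$ are close and symmetrically placed — so that the effective variance proxy used in Banaszczyk's theorem scales like (pair diameter)$^{3/2}$ rather than (pair diameter). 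Engineering this refinement while keeping the Gaussian measure of the relevant constraint body bounded below, across all $L$ halving stages simultaneously, is the crux of the argument and precisely the point where the residual $\sqrt{\log n}$ in \eqref{matousek-result} must be removed.
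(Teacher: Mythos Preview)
Your outline diverges from the paper's in its basic architecture, and the step you yourself flag as ``the crux'' is a genuine gap rather than a detail to be filled in.

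The paper does \emph{not} reduce to an equally weighted measure and then halve. It keeps non-uniform weights throughout: the iterative step (Proposition~\ref{term-decrease-by-a-factor-reluk-proposition}) takes an arbitrary probability measure on $N$ atoms, splits off those atoms whose weight lies below the median (so each is at most $2/N$), groups them into \emph{triples} $\{u_j,v_j,w_j\}$ ordered by weight, and applies a Beck--Spencer \emph{partial} coloring $\chi(j)\in\{-1,0,1\}$. For each nonzero $\chi(j)$ one atom is deleted and its mass is redistributed within the triple so that the total is preserved and all weights stay nonnegative. Because a partial coloring already shrinks the support by a fixed constant factor, the paper never needs a full coloring; iterating Proposition~\ref{term-decrease-by-a-factor-reluk-proposition} then sums geometrically to the stated rate. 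Your scheme --- equal weights, exact halving --- forces a full sign assignment at every stage, and whether Banaszczyk can deliver that here without reintroducing a logarithm depends entirely on the convex body you feed it, which you have not specified. The naive choice (an $L^\infty$ ball over a net of $S^d$) needs side-length of order $\sqrt{\log(\text{net size})}$ to have Gaussian measure $\geq 1/2$, and you are back to a log loss.

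More fundamentally, your route to the exponent $3/(2d)$ is not actually there. You assert that the variance proxy should scale like $(\text{pair diameter})^{3/2}$ via ``second-order cancellation,'' but $t\mapsto|t|$ admits no pointwise second-order Taylor estimate, and your $v_i(x)=\tfrac12\bigl(|x\cdot y_i|-|x\cdot y_i'|\bigr)$ is generically nonzero for every $x$, so geometric proximity of the atoms buys only the Lipschitz factor. The paper obtains $3/(2d)$ from a different mechanism, a multiscale decomposition on the \emph{test-point} side: at each dyadic level $l$ it builds a net $N_l\subset S^d$ with $|N_l|\leq C2^{ld}$ via Lemma~\ref{covering-set-lemma}, and replaces $|x\cdot\theta|$ by its \emph{first-order Taylor remainder} about $\pi_{l-1}(x)$. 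That remainder vanishes identically whenever $x$ and $\pi_{l-1}(x)$ lie on the same side of $\theta^\perp$, and the covering lemma guarantees this fails for at most $2^{-l}N$ of the atoms $\theta$. Hence at level $l$ one has at most $2^{-l}N$ nonzero summands, each of magnitude at most $C2^{-l}/N$, so Bernstein gives tail scale $M_l\sim 2^{-3l/2}N^{-1/2}$; the entropy bookkeeping against $|N_l|\leq C2^{ld}$ then balances at $2^l\sim N^{1/d}$ and yields $N^{-1/2-3/(2d)}$. None of this sparsity-at-each-scale structure is present in your pairing-of-atoms picture, and the spherical-harmonic heuristic you invoke does not substitute for it.
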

We remark that our method produces summands of unequal length (i.e. a non-uniform distribution $\tau'$) and we do not know whether this approximation can be achieved using summands of equal length (even for the ball $B^{d+1}$) when $d < 4$.

Recently, there has been renewed interest in the zonoid approximation problem due to its connection with approximation by shallow ReLU$^k$ neural networks \cite{bach2017breaking}. The ReLU$^k$ activation function (simply called ReLU when $k=1$) is defined by
\begin{equation}
    \sigma_k(x) = x_+^k :=  \begin{cases}
        x^k & x \geq 0\\
        0 & x < 0,
    \end{cases}
\end{equation}
where in the case $k=0$ we interpret $0^0=1$ (so that $\sigma_0$ is the Heaviside function). A shallow ReLU$^k$ neural network is a function on $\mathbb{R}^d$ of the form
\begin{equation}
    f_n(x) = \sum_{i=1}^n a_i\sigma_k(\omega_i\cdot x + b_i) = \sum_{i=1}^n a_i(\omega_i\cdot x + b_i)_+^k,
\end{equation}
where the $a_i\in \mathbb{R}$ are coefficients, the $\omega_i\in \mathbb{R}^d$ are directions, the $b_i\in \mathbb{R}$ are the offsets (or biases) and $n$ (the number of terms) is called the width of the network.

Shallow neural networks can be viewed as a special case of non-linear dictionary approximation. Given a Banach space $X$, let $\mathbb{D}\subset X$ be a bounded subset, i.e., $\|\mathbb{D}\| := \sup_{d\in \mathbb{D}} \|d\|_X < \infty$, which we call a dictionary. Non-linear dictionary approximation methods seek to approximate a target function $f$ by elements of the set
\begin{equation}\label{sigma-n-definition-equation}
    \Sigma_n(\mathbb{D}) := \left\{\sum_{i=1}^n a_id_i,~a_i\in \mathbb{R},~d_i\in \mathbb{D}\right\},
\end{equation}
i.e., the set of $n$-term linear combinations of dictionary elements. Note that because the elements $d_i$ are not fixed, this a non-linear set of functions. It is often important to obtain some control on the coefficients $a_i$ in a non-linear dictionary expansion. For this reason, we introduce the set
\begin{equation}
    \Sigma^M_n(\mathbb{D}) := \left\{\sum_{i=1}^n a_id_i,~a_i\in \mathbb{R},~d_i\in \mathbb{D},~\sum_{i=1}^n |a_i| \leq M\right\}
\end{equation}
of non-linear dictionary expansions with $\ell^1$-bounded coefficients. 

We consider using shallow ReLU$^k$ neural networks to approximate functions and derivatives of order up to $k$ uniformly on a bounded domain $\Omega\subset \mathbb{R}^d$, so we take our Banach space $X$ to be the Sobolev space $W^k(L_\infty(\Omega))$ (see for instance \cite{evans2010partial}), with norm given by
\begin{equation}
    \|f\|_{W^{k}(L_\infty(\Omega))} := \sup_{|\alpha| \leq k} \|D^\alpha f\|_{L^\infty(\Omega)}.
\end{equation}
In our analysis, we will often take $\Omega = B^d := \{x:~|x| \leq 1\}$ to be the unit ball in $\mathbb{R}^d$ to simplify the presentation. We remark that our results can be extended to a general bounded domain in a straightforward manner.

Shallow neural networks correspond to non-linear approximation with the dictionary
\begin{equation}\label{Pkd-dictionary-definition}
    \mathbb{D} = \mathbb{P}_k^d := \{\sigma_k(\omega_i\cdot x + b_i),~\omega_i\in S^{d-1},~b_i\in [a,b]\} \subset W^{k}(L_\infty(\Omega)),
\end{equation}
where by positive homogeneity we can take $\omega_i$ on the unit sphere, and the biases $b_i$ are restricted to an interval depending upon $\Omega$ to ensure both the boundedness and expressiveness of $\mathbb{P}_k^d$ (see for instance \cite{siegel2023characterization}). When $\Omega = B^d$ is the unit ball in $\mathbb{R}^d$, we can take $[a,b] = [-1,1]$ for example, since $$\sigma_k(\omega\cdot x + b) + (-1)^k\sigma_k(-\omega\cdot x - b)$$
for $\omega\in S^{d-1}$ and $b\in [-1,1]$ spans the space of polynomials of degree at most $k$ on $B^d$.

A typical class of functions considered in the context of non-linear dictionary approximation is the variation space of the dictionary $\mathbb{D}$, defined as follows. Let
\begin{equation}\label{B-1-d-definition}
    B_1(\mathbb{D}) := \overline{\bigcup_{n=1}^\infty \Sigma^1_n(\mathbb{D})}
\end{equation}
denote the closed symmetric convex hull of the dictionary $\mathbb{D}$ and define the variation norm of a function $f\in X$ by
\begin{equation}
    \|f\|_{\mathcal{K}_1(\mathbb{D})} := \inf\{s > 0:~f\in sB_1(\mathbb{D})\}.
\end{equation}
This construction is also called the gauge norm of the set $B_1(\mathbb{D})$ (see for instance \cite{rockafellar1997convex}), and has the property that the unit ball of the $\mathcal{K}_1(\mathbb{D})$-norm is exactly the closed convex hull $B_1(\mathbb{D})$. We also write
\begin{equation}\label{K-1-d-definition-equation}
    \mathcal{K}_1(\mathbb{D}) := \{f\in X:~\|f\|_{\mathcal{K}_1(\mathbb{D})} < \infty\}
\end{equation}
for the space of functions with finite $\mathcal{K}_1(\mathbb{D})$-norm, which is a Banach space with the $\mathcal{K}_1(\mathbb{D})$-norm (see \cite{siegel2023characterization}, Lemma 1).
The variation norm and variation space have been introduced in different forms in the literature and play an important role in statistics, signal processing, non-linear approximation, and the theory of shallow neural networks (see for instance \cite{barron2008approximation,temlyakov2008greedy,devore1996some,devore1998nonlinear,barron1993universal,jones1992simple,siegel2020approximation,siegel2022high,ongie2019function,parhi2021banach}).

In the case corresponding to shallow ReLU$^k$ neural networks, i.e., $\mathbb{D} = \mathbb{P}_k^d$, the variation space can equivalently be defined via integral representations, which were studied for example in \cite{bach2017breaking,ma2022barron}. Specifically, we have $f\in \mathcal{K}_1(\mathbb{P}_k^d)$ iff there exists a (signed measure) $d\mu$ of bounded variation on $S^{d-1}\times [a,b]$ such that
\begin{equation}
    f(x) = \int_{S^{d-1}\times [a,b]} \sigma_k(\omega\cdot x + b)d\mu(\omega,b)
\end{equation}
pointwise almost everywhere on $\Omega$. Moreover, the variation norm is given by
\begin{equation}
    \|f\|_{\mathcal{K}_1(\mathbb{P}_k^d)} = \inf\left\{\int_{S^{d-1}\times [a,b]}d|\mu|(\omega,b):~f(x) = \int_{S^{d-1}\times [a,b]} \sigma_k(\omega\cdot x + b)d\mu(\omega,b)\right\},
\end{equation}
where the infimum above is taken over all measures with finite total variation giving such a representation of $f$. This is due to the fact that
\begin{equation}\label{eq-221}
    B_1(\mathbb{P}_k^d) = \left\{\int_{S^{d-1}\times [a,b]} \sigma_k(\omega\cdot x + b)d\mu(\omega,b):~\int_{S^{d-1}\times [a,b]}d|\mu|(\omega,b) \leq 1\right\},
\end{equation}
which follows from Lemma 3 in \cite{siegel2023characterization}, and an `empirical' discretization of the integral in \eqref{eq-221} using the fact that half-spaces have bounded VC-dimension \cite{vapnik1971uniform,vapnik1999nature} (note that the closure in \eqref{B-1-d-definition} is taken in the $X = W^k(L_\infty)$-norm). For the details of this argument, we refer to Proposition 2.2 in \cite{yang2023optimal}. In fact, it follows easily from this that we get the same set $B_1(\mathbb{P}_k^d)$, and thus the same variation space $\mathcal{K}_1(\mathbb{P}_k^d)$, even if we take the closure in \eqref{B-1-d-definition} with respect to a weaker norm such as $L_2$.

One important question is how efficiently functions in the variation space $\mathcal{K}_1(\mathbb{D})$ (see \eqref{K-1-d-definition-equation}) can be approximated by non-linear dictionary expansions $\Sigma_n(\mathbb{D})$ with $n$ terms (see \eqref{sigma-n-definition-equation}). When the space $X$ is a Hilbert space (or more generally a type-$2$ Banach space), Maurey's empirical method gives the bound (see for instance \cite{barron1993universal,jones1992simple,pisier1981remarques})
\begin{equation}\label{maurey-jones-barron-rate}
    \inf_{f_n\in \Sigma_n(\mathbb{D})} \|f - f_n\|_X \leq C\|f\|_{\mathcal{K}_1(\mathbb{D})}n^{-\frac{1}{2}}.
\end{equation}
The constant here depends only upon the norm of the dictionary $\|\mathbb{D}\|$ and the type-$2$ constant of the space $X$. Moreover, the norm of the coefficients $a_i$ can be controlled, so that if $f$ is in $B_1(\mathbb{D})$ (the unit ball of $\mathcal{K}_1(\mathbb{D})$), then $f_n$ can be taken in $\Sigma^1_n(\mathbb{D})$. This fact was first applied to neural network approximation by Jones and Barron \cite{barron1993universal,jones1992simple}, and forms the basis of the dimension independent approximation rates obtained for shallow neural networks.

For many dictionaries, for example the dictionaries $\mathbb{P}_k^d$ corresponding to shallow neural networks defined in \eqref{Pkd-dictionary-definition}, the rate \eqref{maurey-jones-barron-rate} can be significantly improved (see for instance \cite{bach2017breaking,klusowski2018approximation,siegel2022sharp,makovoz1996random}). For instance, in the $L_2$-norm we get the rate
\begin{equation}\label{l2-approximation-rate-equation}
    \inf_{f_n\in \Sigma_n(\mathbb{P}_k^d)} \|f - f_n\|_{L_2(\Omega)} \leq C\|f\|_{\mathcal{K}_1(\mathbb{P}_k^d)}n^{-\frac{1}{2}-\frac{2k+1}{2d}},
\end{equation}
and this rate is optimal up to logarithmic factors. With mild control on the coefficients $a_i$ (for instance $|a_i| \leq C$ for a constant $C$) this was proved using an estimate on the metric entropy of $B_1(\mathbb{P}_k^d)$ in \cite{siegel2022sharp}, Theorem 10. In the general case, with no restriction on the coefficients $a_i$, this lower bound follows from VC-dimension arguments. Specifically, using the Sobolev space embedding $W^s(L_2(\Omega))\subset \mathcal{K}_1(\mathbb{P}_k^d)$ for $s = (d+2k+1)/2$ given in \cite{mao2024approximation}, we see that
\begin{equation}
    \sup_{f\in B_1(\mathbb{P}_k^d)}\inf_{f_n\in \Sigma_n(\mathbb{P}_k^d)} \|f - f_n\|_{L_2(\Omega)} \geq C\sup_{\|f\|_{W^s(L_2(\Omega))}\leq 1}\inf_{f_n\in \Sigma_n(\mathbb{P}_k^d)} \|f - f_n\|_{L_2(\Omega)},
\end{equation}
for a constant $C$ depending upon $k,d$ and $\Omega$. The VC-dimension of the function class $\Sigma_n(\mathbb{P}_k^d)$ is bounded by $Cn\log(n)$ (see \cite{bartlett2019nearly} or Lemma 3.11 in \cite{devore2021neural}), so that Theorem 22 in \cite{siegel2023optimal} implies
\begin{equation}\label{approximation-lower-bound-equation}
    \sup_{f\in B_1(\mathbb{P}_k^d)}\inf_{f_n\in \Sigma_n(\mathbb{P}_k^d)} \|f - f_n\|_{L_2(\Omega)} \geq C(n\log n)^{-\frac{1}{2}-\frac{2k+1}{2d}}.
\end{equation}
Thus, the approximation rate given in \eqref{l2-approximation-rate-equation} is tight up to logarithmic factors. It is an intriguing open problem whether these logarithmic factors can be removed.

In this work, we consider approximation rates for the dictionary $\mathbb{P}_k^d$ (see \eqref{Pkd-dictionary-definition}) on its variation space $\mathcal{K}_1(\mathbb{P}_k^d)$ in the $W^{m}(L_\infty)$-norm for $m=0,...,k$, i.e. we consider uniform approximation of both $f$ and its derivatives up to order $m$. This is a much stronger error norm than the $L_2$-norm, and approximating derivatives is important for applications of shallow neural networks to scientific computing (see for instance \cite{siegel2023greedy,lu2022priori,xu2020finite}). For an arbitrary (bounded) dictionary $\mathbb{D}\subset W^{m}(L_\infty)$, no rate of approximation for $\mathcal{K}_1(\mathbb{D})$ by $\Sigma_n(\mathbb{D})$ can be obtained in general, i.e., there exist dictionaries for which the rate can be arbitrarily slow \cite{donahue1997rates}. Thus, our results rely upon the special structure of the $\mathbb{P}_k^d$ dictionary of ReLU$^k$ atoms.

This problem has previously been considered in the case $m=0$, i.e. in the $L_\infty$-norm (see for instance \cite{bach2017breaking,klusowski2018approximation,ma2022uniform,barron1992neural,cheang2000better,yukich1995sup}). In this case, when $k=0$ an approximation rate of
\begin{equation}
    \inf_{f_n\in \Sigma_n(\mathbb{P}_0^d)} \|f - f_n\|_{L_\infty(\Omega)} \leq C\|f\|_{\mathcal{K}_1(\mathbb{P}_0^d)}n^{-\frac{1}{2}-\frac{1}{2d}},
\end{equation}
was proved in \cite{ma2022uniform} using results from geometric discrepancy theory \cite{matouvsek1995tight}. For $k=1$, the aforementioned results on approximating zonoids by zonotopes \cite{matouvsek1996improved} were used in \cite{bach2017breaking} to get a rate of 
\begin{equation}
        \inf_{f_n\in \Sigma_n(\mathbb{P}_1^d)} \|f - f_n\|_{L_\infty(S^d)} \leq C\|f\|_{\mathcal{K}_1(\mathbb{P}_1^d)}\begin{cases}
            n^{-\frac{1}{2}-\frac{3}{2d}}\sqrt{\log{n}} & d = 2,3\\
            n^{-\frac{1}{2}-\frac{3}{2d}} & d \geq 4
        \end{cases}
\end{equation}
on the sphere $S^d$. Finally, when $k\geq 2$, the best known result is \cite{klusowski2018approximation}
\begin{equation}
    \inf_{f_n\in \Sigma_n(\mathbb{P}_k^d)} \|f - f_n\|_{L_\infty(\Omega)} \leq C\|f\|_{\mathcal{K}_1(\mathbb{P}_k^d)} n^{-\frac{1}{2}-\frac{1}{d}} \sqrt{\log{n}}.
\end{equation}
We remark that for all of these results, the coefficients of $f_n$ can be controlled. Specifically, if $f\in B_1(\mathbb{P}_k^d)$, then $f_n$ can be taken in $\Sigma_n^1(\mathbb{P}_k^d)$.

By refining the techniques of discrepancy theory used to obtain these $L_\infty$ bounds, we are able to prove the following result, which is essentially a generalization of Theorem \ref{optimal-zonoid-approximation-theorem}.
\begin{theorem}\label{shallow-network-approximation-dist-result}
    Let $k \geq 0$. For any probability distribution $\tau$ on $S^{d-1}\times [-1,1]$ there exists a probability distribution $\tau'$ supported on at most $n$ points such that for any multi-index $\alpha$ with $|\alpha| \leq k$ we have
    \begin{equation}
        \sup_{x\in B^d} \left|D_x^\alpha\left(\int_{S^d}\sigma_k(\omega\cdot x + b)d\tau(\omega,b) - \int_{S^d}\sigma_k(\omega\cdot x + b)d\tau'(\omega,b)\right)\right| \leq Cn^{-\frac{1}{2}-\frac{2(k - |\alpha|) + 1}{2d}},
    \end{equation}
    where $D_x^\alpha$ denotes the $\alpha$-th order derivative with respect to $x$. Here the constant $C = C(d,k)$ depends only upon $d$ and $k$. 
\end{theorem}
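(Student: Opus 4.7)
The starting observation is the differentiation identity
\[
D_x^\alpha \sigma_k(\omega\cdot x+b) = \frac{k!}{(k-|\alpha|)!}\,\omega^\alpha\,\sigma_{k-|\alpha|}(\omega\cdot x+b),
\]
which reduces the theorem to constructing a single probability measure $\tau'$, supported on at most $n$ points, such that the discrepancy against $\tau$ of every test function
\[
h_{\alpha,x}(\omega,b):=\omega^\alpha\sigma_{k-|\alpha|}(\omega\cdot x+b),\qquad x\in B^d,\ |\alpha|\le k,
\]
is bounded by $C n^{-1/2-(2(k-|\alpha|)+1)/(2d)}$. The special case $\alpha=0$, $k=1$ is exactly Theorem \ref{optimal-zonoid-approximation-theorem} after passing between $S^d$ and $S^{d-1}\times[-1,1]$, so the plan is to apply the construction used for Theorem \ref{optimal-zonoid-approximation-theorem} simultaneously to the families $\mathcal{F}_\alpha:=\{h_{\alpha,x}:x\in B^d\}$.

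I would proceed in three steps. First, for each $\alpha$, estimate the $L_2(\tau)$-metric entropy of $\mathcal{F}_\alpha$. Off the hyperplane $\omega\cdot x+b=0$ the kernel $h_{\alpha,x}$ is $C^{k-|\alpha|}$ in $(\omega,b)$, while the $\tau$-mass in a neighborhood of the hyperplane is controlled by the width of the slab between the two hyperplanes corresponding to a pair $x,x'$; combined with the bounded VC-dimension of half-spaces, this yields an entropy estimate whose scaling encodes the target exponent $(2(k-|\alpha|)+1)/(2d)$ via a Dudley-type integral. Second, starting from an empirical $\tau$-sample of size $N\gg n$, apply an iterated partial-coloring scheme that halves the support while preserving nonnegativity and total mass; at each stage the entropy bound controls the incremental discrepancy, and stopping when exactly $n$ atoms remain produces $\tau'$. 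The removal of the logarithmic factor present in \eqref{matousek-result} is expected to come from a sharper chaining / entropy integral tailored to $\mathcal{F}_\alpha$, the same refinement that underlies Theorem \ref{optimal-zonoid-approximation-theorem}.

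The main obstacle is carrying out a single iterated-coloring construction that is simultaneously optimal for all multi-indices $\alpha$. Since the required rate depends on $|\alpha|$ but only one $\tau'$ is produced, the halving steps must be tuned so that each partial coloring reduces the discrepancy on the roughest class (corresponding to $|\alpha|=k$, where $\sigma_0$ is the Heaviside function and the entropy is largest) to the correct scale; the sharper entropy bounds for the smoother $\mathcal{F}_\alpha$ with $|\alpha|<k$ then automatically yield the better rate at that scale. Bernstein-type inequalities trading locality for smoothness, together with the finiteness of $\{\alpha:|\alpha|\le k\}$ (so that a union bound over $\alpha$ costs only a constant depending on $d,k$), should close the argument and produce the stated family of bounds from a single $n$-atomic measure $\tau'$.
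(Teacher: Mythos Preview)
Your overall architecture---reduce to finitely supported $\tau$, then iterate a partial-coloring step that shrinks the support by a constant factor while controlling discrepancy against all $\mathcal{F}_\alpha$ simultaneously---matches the paper (Proposition~\ref{term-decrease-by-a-factor-reluk-proposition} plus its iteration). The differentiation identity and the observation that a single coloring must serve all $\alpha$ are both correct.

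There is, however, a genuine gap in where you locate the logarithmic improvement. You attribute it to ``a sharper chaining/entropy integral tailored to $\mathcal{F}_\alpha$.'' That is not the mechanism. Matou\v{s}ek's partial-coloring bound was already sharp for \emph{uniform} empirical measures; the $\sqrt{\log n}$ in low dimensions entered when one reduced a general $\tau$ to a uniform one before iterating. The paper's new idea is to avoid that reduction entirely. At each step one splits the current support at the median weight, so that on the lower half $S_-$ every coefficient is at most $2/N$; then one groups $S_-$ into ordered triples $\{u_j,v_j,w_j\}$ and uses a partial coloring $\chi(j)\in\{-1,0,1\}$ that deletes one of $u_j,v_j$ and transfers the mass defect to $w_j$. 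This is what simultaneously preserves nonnegativity and total mass while removing a constant fraction of points, and the uniform $2/N$ bound on the affected coefficients is exactly what makes the Bernstein tail estimates go through without a log loss. A standard $\pm 1$ halving does not preserve positivity, and your sketch gives no alternative mechanism; without one you are pushed back to uniformization and the log reappears.

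On the simultaneous treatment of all $\alpha$, your intuition (the roughest class governs the coloring, smoother classes come for free) is right, but the paper implements it through a specific multiscale Taylor decomposition (Lemma~\ref{relu-k-representation-lemma}): at covering level $l$ one works with $\phi_{x,l}^m=\sigma_k^{(m)}(x;\theta)-\mathcal{T}^{m,k-m}_{\pi_{l-1}(x)}(x;\theta)$, which vanishes unless the hyperplane $\theta$ separates $x$ from its level-$(l{-}1)$ center and is $O(2^{-l(k-m)})$ otherwise. These two facts feed the correct variance into Bernstein at each scale and let the entropy budget, summed over all $m$ and all scales, fit under the partial-coloring threshold with the $m$-dependent discretization $\Delta_l^m$. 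A bare union bound over the finitely many $\alpha$ is not by itself enough; you need this scale-by-scale decomposition to get the exponent $\frac{2(k-|\alpha|)+1}{2d}$ out of a single coloring.
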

As a corollary, we obtain the following approximation rate, which is sharp up to logarithmic factors by \eqref{approximation-lower-bound-equation}.
\begin{theorem}\label{shallow-network-approximation-theorem}
    Let $0\leq m\leq k$ and $\Omega = B^d$. Then we have the bound
    \begin{equation}
        \inf_{f_n\in \Sigma_n(\mathbb{P}_k^d)} \|f - f_n\|_{W^{m}(L_\infty(\Omega))} \leq C\|f\|_{\mathcal{K}_1(\mathbb{P}_k^d)}n^{-\frac{1}{2}-\frac{2(k-m) + 1}{2d}},
    \end{equation}
    where $C = C(d,k)$ is a constant. 
    Moreover, the coefficients of $f_n$ can be controlled, so if $f\in B_1(\mathbb{P}_k^d)$, then $f_n$ can be taken in $\Sigma_n^1(\mathbb{P}_k^d)$.
\end{theorem}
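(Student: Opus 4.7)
The plan is to deduce Theorem \ref{shallow-network-approximation-theorem} as a direct corollary of Theorem \ref{shallow-network-approximation-dist-result} via the integral representation of the variation space. By homogeneity, I would normalize so that $\|f\|_{\mathcal{K}_1(\mathbb{P}_k^d)} \leq 1$, i.e. $f \in B_1(\mathbb{P}_k^d)$. Then by the identification \eqref{eq-221}, I pick a signed Borel measure $\mu$ on $S^{d-1}\times[-1,1]$ with total variation $|\mu|(S^{d-1}\times[-1,1]) \leq 1$ such that
\begin{equation*}
    f(x) = \int_{S^{d-1}\times[-1,1]} \sigma_k(\omega\cdot x + b)\,d\mu(\omega,b).
\end{equation*}

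Next I would Hahn-decompose $\mu = \mu_+ - \mu_-$, set $\lambda_\pm := \mu_\pm(S^{d-1}\times[-1,1])$, and (assuming both are nonzero; the degenerate case is trivial) normalize to probability measures $\tau_\pm := \lambda_\pm^{-1}\mu_\pm$ with $\lambda_+ + \lambda_- \leq 1$. Apply Theorem \ref{shallow-network-approximation-dist-result} to each $\tau_\pm$ using $\lfloor n/2 \rfloor$ atoms, obtaining discrete probability measures $\tau'_\pm$. Define
\begin{equation*}
    f_n(x) := \lambda_+ \!\int \sigma_k(\omega\cdot x + b)\,d\tau'_+(\omega,b) - \lambda_- \!\int \sigma_k(\omega\cdot x + b)\,d\tau'_-(\omega,b).
\end{equation*}
This is a ReLU$^k$ network with at most $n$ neurons, and the $\ell^1$-norm of its coefficients equals $\lambda_+ + \lambda_- \leq 1$, so that $f_n \in \Sigma_n^1(\mathbb{P}_k^d)$ as required.

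For the error bound, write $f - f_n = \lambda_+(g_+ - g'_+) - \lambda_-(g_- - g'_-)$, where $g_\pm, g'_\pm$ are the integrals against $\tau_\pm, \tau'_\pm$ respectively. For any multi-index $\alpha$ with $|\alpha|\leq m \leq k$, Theorem \ref{shallow-network-approximation-dist-result} gives
\begin{equation*}
    \|D^\alpha(g_\pm - g'_\pm)\|_{L_\infty(B^d)} \leq C n^{-\frac{1}{2}-\frac{2(k-|\alpha|)+1}{2d}},
\end{equation*}
so by the triangle inequality
\begin{equation*}
    \|D^\alpha(f-f_n)\|_{L_\infty(B^d)} \leq (\lambda_+ + \lambda_-) C n^{-\frac{1}{2}-\frac{2(k-|\alpha|)+1}{2d}} \leq C n^{-\frac{1}{2}-\frac{2(k-m)+1}{2d}},
\end{equation*}
since the exponent is largest (least negative) at $|\alpha| = m$. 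Taking the supremum over $|\alpha|\leq m$ yields the Sobolev norm bound.

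I do not anticipate a serious obstacle here: the heavy analytic work is packaged inside Theorem \ref{shallow-network-approximation-dist-result}, and the deduction amounts to unwinding the definition of the variation space and splitting into signed parts. The only minor point requiring care is invoking \eqref{eq-221} to pass from the closure defining $B_1(\mathbb{P}_k^d)$ to an exact integral representation with $|\mu|\leq 1$; this is justified by the argument cited from \cite{siegel2023characterization,yang2023optimal} after \eqref{eq-221}. If one preferred to avoid this, one could instead approximate $f$ to arbitrary precision by a finite convex combination in $B_1(\mathbb{P}_k^d)$, apply the atomic version of Theorem \ref{shallow-network-approximation-dist-result}, and pass to the limit.
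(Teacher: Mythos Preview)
Your proposal is correct and follows essentially the same approach as the paper: normalize to $B_1(\mathbb{P}_k^d)$, split into positive and negative parts, apply Theorem \ref{shallow-network-approximation-dist-result} to each half with $\lfloor n/2\rfloor$ atoms, and combine. The only cosmetic difference is that the paper works with the $\epsilon$-approximation by finite sums coming from the definition of $B_1(\mathbb{P}_k^d)$ (precisely the alternative you mention in your last paragraph), whereas you invoke the integral representation \eqref{eq-221} and Hahn decompose; both routes are equivalent here.
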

 We remark that by scaling, Theorems \ref{shallow-network-approximation-dist-result} and \ref{shallow-network-approximation-theorem} can be extended to any bounded domain $\Omega\subset \mathbb{R}^d$. Theorem \ref{shallow-network-approximation-theorem} extends the approximation rates derived in \cite{siegel2022sharp} from the $L_2$-norm to the $L_\infty$-norm, which significantly improves upon existing results \cite{ma2022uniform,bach2017breaking,klusowski2018approximation} when $k \geq 1$. In addition, we obtain approximation rates in the $W^m(L_\infty)$-norm, which enables derivatives and function values to be uniformly approximated simultaneously. The approximation rates given in Theorem \ref{shallow-network-approximation-theorem} are an important building block in obtaining approximation rates for shallow ReLU$^k$ networks on Sobolev and Besov spaces \cite{yang2023optimal,mao2024approximation}.

Theorems \ref{optimal-zonoid-approximation-theorem} and \ref{shallow-network-approximation-dist-result} are proved using a modification of the geometric discrepancy argument used in \cite{matouvsek1996improved}, while Theorem \ref{shallow-network-approximation-theorem} is an easy corollary of Theorem \ref{shallow-network-approximation-dist-result}. Theorem \ref{optimal-zonoid-approximation-theorem} follows essentially as a special case of Theorem \ref{shallow-network-approximation-dist-result} when $k=1$, except that the ReLU activation function is replaced by the absolute value function and the setting is changed to the sphere. For this reason, we only give the complete proof of Theorem \ref{shallow-network-approximation-dist-result}. The changes necessary to obtain Theorem \ref{optimal-zonoid-approximation-theorem} are relatively straightforward and left to the reader. We begin by collecting the necessary geometric and combinatorial facts in Section \ref{lemmas-section}. The proofs of Theorems \ref{shallow-network-approximation-dist-result} and \ref{shallow-network-approximation-theorem} are then given in Section \ref{shallow-relu-network-approximation}.

\section{Geometric Lemmas}\label{lemmas-section}
In this section, we collect and prove some geometric lemmas which are crucial to the proofs of Theorems \ref{optimal-zonoid-approximation-theorem} and \ref{shallow-network-approximation-dist-result}. In particular, in the proof of Theorem \ref{shallow-network-approximation-dist-result} we will need the following covering result.
\begin{lemma}\label{covering-set-ball-lemma}
    Let $P$ be an $N$ point subset of $S^{d-1}\times [-1,1]$ (i.e. a set of $N$ halfspaces) and $0 < \delta < 1$ be given. Then there exists a subset $\mathcal{N}\subset B^d$ of the unit ball (depending upon both $P$ and $\delta$) with $|\mathcal{N}| \leq (C/\delta)^{d}$ such that for any $x\in B^d$ there exists a $z\in \mathcal{N}$ with
    \begin{itemize}
        \item $|x - z| \leq C\delta\sqrt{d}.$
        \item $|P\cap \{(\omega,b)\in S^{d-1}\times [-1,1],~\sign{(\omega\cdot x + b)} \neq \sign{(\omega\cdot z + b)}\}| \leq \delta N.$
    \end{itemize}
    Here $C$ is an absolute constant.
\end{lemma}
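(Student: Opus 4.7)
The plan is to prove this lemma by combining a geometric reduction with a cutting-type argument from combinatorial geometry. The key observation is that for any two points $x, z \in B^d$ and any halfspace $(\omega_i, b_i) \in P$, the signs $\sign(\omega_i \cdot x + b_i)$ and $\sign(\omega_i \cdot z + b_i)$ differ if and only if the open segment $[x, z]$ crosses the hyperplane $H_i = \{y : \omega_i \cdot y + b_i = 0\}$. Consequently, if $x$ and $z$ lie in a common convex cell $R \subseteq B^d$, then the number of signs that differ is at most the number of hyperplanes among $\{H_i\}_{i \in P}$ that cross $R$. The lemma therefore reduces to constructing a covering of $B^d$ by at most $(C/\delta)^d$ convex cells, each of diameter at most $C\delta\sqrt{d}$ and each crossed by at most $\delta N$ of the hyperplanes $H_i$; choosing one representative point per cell then yields $\mathcal{N}$.

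For the construction, I would invoke a cutting lemma from combinatorial geometry (in the spirit of Chazelle's cutting lemma, and along the lines of \cite{matouvsek1996improved}): with parameter $r = \lceil 1/\delta \rceil$, one can partition $\mathbb{R}^d$ into $O(r^d) = O((1/\delta)^d)$ simplices such that each one is crossed by at most $N/r \leq \delta N$ hyperplanes. Restricting to $B^d$ preserves both the cell count and the crossing bound. To additionally enforce the diameter requirement, I would intersect each simplex with the standard cubical grid of $\mathbb{R}^d$ of side length $\delta$; every resulting cell is then contained in a single cube and hence has diameter at most $\delta\sqrt{d}$, while still inheriting the bound of $\leq \delta N$ crossing hyperplanes from its parent simplex. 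Selecting one point from each resulting cell produces the candidate net.

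The crux of the proof — and the step I expect to require the most care — is verifying that this refined partition still has at most $(C/\delta)^d$ cells with $C$ an absolute constant (in particular, without a hidden factor like $d^{d/2}$ arising from the interplay between the simplex facets of the cutting and the facets of the grid cubes). The volume-filling contribution to the cell count is straightforward, bounded by $\operatorname{vol}(B^d)/\delta^d = O(1/\delta)^d$, but the boundary cells arising where simplex facets meet cube facets must be counted carefully. Controlling this hinges on the bounded combinatorial complexity of the cutting (each simplex is defined by $O(d)$ facets) together with an incidence-counting argument using $\sum_Q |\{i : H_i \cap Q \neq \emptyset\}| = O(N/\delta^{d-1})$ and convexity-type inequalities — precisely the kind of combinatorial accounting underlying the construction in \cite{matouvsek1996improved} that this paper adapts.
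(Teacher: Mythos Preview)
Your approach via cuttings is conceptually sound for obtaining the two bullet-point conclusions, but it diverges from the paper's proof, and the step you yourself flag as the ``crux'' is a genuine gap rather than routine bookkeeping. Standard cutting lemmas for $N$ hyperplanes in $\mathbb{R}^d$ produce a $(1/r)$-cutting of size $O(r^d)$ where the implied constant depends on $d$ and is not in general of the form $C_{\mathrm{abs}}^d$; refining by a cubical grid and then bounding the number of nonempty (simplex)$\cap$(cube) cells with only an absolute-constant loss would further require control on the total boundary measure of the cutting relative to $\delta^{d-1}$, which cutting constructions do not provide. As written, your plan would at best yield the lemma with $C = C(d)$ --- sufficient for the downstream theorems here, which only claim $C(d,k)$, but not the lemma as stated with an absolute $C$.

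The paper does \emph{not} adapt a cutting construction at this point; it works instead in the dual set system \eqref{ball-set-system} (ground set $S^{d-1}\times[-1,1]$, sets indexed by $x\in B^d$), which has VC-dimension at most $d$. It equips this system with the symmetric-difference pseudometric $d_\mu$ for the mixed measure $\mu = \tfrac12\pi + \tfrac12\pi_P$ (uniform plus empirical on $P$), takes $\mathcal{N}$ to be a maximal $(\delta/2)$-separated subset of $B^d$ under $d_\mu$, and invokes Haussler's packing lemma to obtain $|\mathcal{N}| \leq (C/\delta)^d$ with an absolute $C$ in one stroke. Maximality then gives, for every $x\in B^d$, some $z\in\mathcal{N}$ with $d_\mu(z,x) < \delta/2$; the $\pi_P$-half of $\mu$ yields the sign-change bound directly, while the $\pi$-half gives $\mathbb{E}_{\omega\in S^{d-1}}|(x-z)\cdot\omega| < 2\delta$, whence $|x - z| \leq C\delta\sqrt{d}$. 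This single packing argument replaces your two-stage (cutting $+$ grid) construction and delivers the absolute constant with no cell-counting at all.
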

A version of this lemma on the sphere, which is required for the proof of Theorem \ref{optimal-zonoid-approximation-theorem}, was proved by Matousek \cite{matouvsek1996improved}.
\begin{lemma}[Lemma 6 in \cite{matouvsek1996improved}]\label{covering-set-lemma}
    Let $P$ be an $N$-point subset of $S^d$ and $0 < \delta < 1$ be given. There exists a subset $\mathcal{N} \subset S^d$ (depending upon both $P$ and $\delta$) with $|\mathcal{N}| \leq C\delta^{-d}$ such that for any $x\in S^d$, there exists a $z\in \mathcal{N}$ with
    \begin{itemize}
        \item $|x - z| \leq \delta.$
        \item $|P\cap \{y\in S^d,~\sign{(y\cdot x)} \neq \sign{(y\cdot z)}\}| \leq \delta N.$
    \end{itemize}
    Here $C = C(d)$ is a constant independent of $N, P$ and $\delta$.
\end{lemma}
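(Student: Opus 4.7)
\textbf{Proof plan for Lemma \ref{covering-set-lemma}.}

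Identifying each $y \in P$ with the great $(d-1)$-sphere $E_y := \{w \in S^d : w \cdot y = 0\}$, the condition $\sign(x \cdot y) \neq \sign(z \cdot y)$ is equivalent to $E_y$ separating $x$ and $z$, i.e.\ the minimizing geodesic from $x$ to $z$ crosses $E_y$. The task is therefore to produce a set $\mathcal{N} \subset S^d$ of cardinality $O(\delta^{-d})$ such that every $x \in S^d$ admits some $z \in \mathcal{N}$ within spherical distance $\delta$ whose geodesic to $x$ crosses at most $\delta N$ of the great spheres $\{E_y\}_{y \in P}$.

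The plan is to construct $\mathcal{N}$ as a system of representatives for a suitable cutting of the arrangement $\{E_y\}_{y \in P}$. Concretely, I would apply Chazelle's cutting lemma, adapted to great $(d-1)$-spheres on $S^d$ (for instance by lifting to the arrangement of central hyperplanes in $\mathbb{R}^{d+1}$ and intersecting with $S^d$), with parameter $r = 1/\delta$. This produces a decomposition of $S^d$ into at most $O(\delta^{-d})$ geodesically convex cells, each crossed by at most $\delta N$ of the $E_y$'s. A further refinement step---either overlaying with a spherical grid of resolution $\delta$ and amortizing the newly created subcells against the volume of each original cell, or using a well-shaped (bounded aspect ratio) variant of the cutting---can be arranged to keep the total cell count $O(\delta^{-d})$ while forcing each cell to have diameter at most a constant multiple of $\delta$. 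After a harmless rescaling of $\delta$ by a constant depending only on $d$, I pick one representative $z_C$ from each refined cell and set $\mathcal{N} := \{z_C\}$, so that $|\mathcal{N}| \le C(d)\delta^{-d}$.

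Verification of the two required properties is then essentially automatic: for any $x \in S^d$, let $C(x)$ be the refined cell containing $x$ and set $z := z_{C(x)}$; then $|x - z| \le \diam C(x) \le \delta$, and by geodesic convexity of $C(x)$ the minimizing geodesic from $x$ to $z$ stays in $C(x)$, so if $\sign(x \cdot y) \neq \sign(z \cdot y)$ then $E_y$ must cross $C(x)$, which bounds the number of such $y$ by $\delta N$. The technical heart of the argument---and the main obstacle---is producing a cutting with simultaneous control on cell count, crossing number, and cell diameter: the first two are the classical cutting lemma, but the diameter constraint is delicate, since a careless grid overlay can blow up the cell count. The cleanest way to handle this is to build the cutting recursively via a Clarkson--Shor style random-sampling argument, halting the recursion inside a cell once its diameter falls below $\delta$ and amortizing the extra subdivisions against the cell's volume, which keeps the overall count of order $\delta^{-d}$.
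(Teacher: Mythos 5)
Your route via Chazelle-style cuttings is genuinely different from the argument the paper relies on. The paper (following Matousek's Lemma 6) uses Haussler's packing lemma for bounded-VC set systems: one works in the set system whose sets are the hemispheres indexed by points of $S^d$, and introduces the mixed probability measure $\mu = \tfrac{1}{2}\pi + \tfrac{1}{2}\pi_P$, where $\pi$ is the uniform measure on $S^d$ and $\pi_P$ is the empirical measure on $P$. A maximal $(\delta/2)$-separated family in the pseudometric $d_\mu(x,z) = \mu(\{y : \sign(y\cdot x) \neq \sign(y\cdot z)\})$ has cardinality $O(\delta^{-d})$ by Haussler's bound, and both required properties then come for free from maximality: $d_{\pi_P}(x,z) < \delta$ gives the discrepancy bound, and $d_\pi(x,z) < \delta$ forces $|x-z| \lesssim \delta$ (since the symmetric difference of hemispheres has uniform measure comparable to the angular distance). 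This collapses the two constraints — proximity and low crossing number — into a single packing estimate with no geometric decomposition at all. Your approach instead tries to realize the same net explicitly as cell representatives of a cutting, which is a very natural alternative but carries the extra burden of making the cells both few and small.

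That extra burden is where the gap lies, and you correctly flag it as the technical heart, but the proposed fix does not obviously close it. The suggestion is to amortize grid subdivisions against cell volume in a Clarkson--Shor recursion, but the cells of a cutting can be arbitrarily thin slivers: a cell can have diameter $\Theta(1)$ while its volume is as small as you like, and chopping such a cell into pieces of diameter $\leq \delta$ costs $\Omega(\delta^{-(d-1)})$ pieces no matter how small its volume is. Volume amortization therefore cannot bound the overhead from slivers, and one would need a separate argument controlling their number or total surface area. There is no standard off-the-shelf cutting lemma guaranteeing $O(r^d)$ cells with crossing number $O(n/r)$ \emph{and} diameter $O(1/r)$ simultaneously, so as written the proposal has a real hole at exactly the step you identify as delicate. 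The measure-theoretic argument in the paper is attractive precisely because it sidesteps this issue entirely.
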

The proof of Lemma \ref{covering-set-ball-lemma} follows essentially the same ideas as the proof of Lemma \ref{covering-set-lemma} in \cite{matouvsek1996improved}. However, for completeness we give the proof here as well. The version given in Lemma \ref{covering-set-ball-lemma} explicitly tracks the dimension dependence of the constants and could be used to track the dimension dependence of the constants in Theorems \ref{optimal-zonoid-approximation-theorem} and \ref{shallow-network-approximation-dist-result}.

We begin by recalling the relevant combinatorial background (see for instance \cite{matousek1999geometric}, Chapter 5).

\begin{definition}
    A set system $(X,\mathcal{S})$ consists of a set $X$ and a collection of subsets $\mathcal{S}\subset 2^X$ of $X$.
\end{definition}
The particular set system which we will consider in the proof of Lemma \ref{covering-set-ball-lemma} is given by
\begin{equation}\label{ball-set-system}
    X = S^{d-1}\times [-1,1],~\mathcal{S} = \left\{\{(\omega,b):~\omega\cdot x + b \geq 0\},~x\in B^d\right\}.
\end{equation}
In other words, the elements are halfspaces and the sets consists of all halfspaces containing a given point $x$ in the unit ball.

Given a subset $Y\subset X$, we write $\mathcal{S}|_Y = \{Y\cap S,~S\in \mathcal{S}\}$ for the system $\mathcal{S}$ restricted to the set $Y$.
\begin{definition}[VC-dimension \cite{vapnik1971uniform}]\label{VC-dimension-definition}
    A subset $Y\subset X$ is shattered by $\mathcal{S}$ if $\mathcal{S}|_Y = 2^Y$. The VC-dimension of the set system $(X,\mathcal{S})$ is the cardinality of the largest finite subset of $X$ which is shattered by $\mathcal{S}$ (or infinite if arbitrarily large finite subsets can be shattered).
\end{definition}
An important ingredient in the proof is the following bound on the VC-dimension of the set system given in \eqref{ball-set-system}.
\begin{lemma}[Lemma 3 in \cite{ma2022uniform}]\label{halfspace-VC-bound-lemma}
    The set system in \eqref{ball-set-system} has VC-dimension bounded by $d$.
\end{lemma}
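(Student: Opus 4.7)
The plan is to argue by contradiction. Suppose $d+1$ halfspaces $(\omega_i, b_i) \in S^{d-1}\times[-1,1]$, $i=1,\ldots,d+1$, are shattered, so that for every $T\subseteq\{1,\ldots,d+1\}$ there exists $x_T\in B^d$ with $\omega_i\cdot x_T+b_i\geq 0$ iff $i\in T$. The starting observation is purely linear-algebraic: the $d+1$ unit vectors $\omega_i \in \mathbb{R}^d$ must be linearly dependent, so there exist scalars $\lambda_1,\ldots,\lambda_{d+1}$, not all zero, with $\sum_i \lambda_i\omega_i = 0$. Setting $c := -\sum_i \lambda_i b_i$, this yields the identity
\begin{equation*}
    \sum_{i=1}^{d+1}\lambda_i(\omega_i\cdot x+b_i) \equiv -c \qquad \text{on } \mathbb{R}^d.
\end{equation*}

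Next, I would partition the indices as $P=\{i:\lambda_i>0\}$, $N=\{i:\lambda_i<0\}$, $Z=\{i:\lambda_i=0\}$, and, after replacing $(\lambda,c)$ by $(-\lambda,-c)$ if necessary, assume $P\neq\emptyset$. Plugging in the shattering witness $x_T$ for the choice $T=P$, every summand $\lambda_i(\omega_i\cdot x_T+b_i)$ is nonnegative: for $i\in P$ we have $\lambda_i>0$ together with $\omega_i\cdot x_T+b_i\geq 0$; for $i\in N$ we have $\lambda_i<0$ together with $\omega_i\cdot x_T+b_i<0$; for $i\in Z$ the term vanishes. This forces $-c\geq 0$. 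Plugging in the witness $x_{T'}$ for $T'=\{1,\ldots,d+1\}\setminus P$, the terms for $i\in P$ satisfy $\lambda_i>0$ while $\omega_i\cdot x_{T'}+b_i<0$ strictly, so each such term is strictly negative; the $N$ terms are nonpositive and the $Z$ terms vanish. Since $P\neq\emptyset$, the total sum is strictly negative, giving $-c<0$. The two bounds are incompatible, completing the contradiction, so no $d+1$ points can be shattered.

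The delicate point I expect to be the main obstacle is the asymmetric handling of the $\geq 0$ versus strict $<0$ convention built into the shattering definition. The strictness of the negative half-space is precisely what produces the strict inequality $-c<0$ at the second witness (coming from the $P$ terms), which is essential to contradict the merely nonstrict $-c\geq 0$ obtained at the first witness; if the convention were symmetric, both evaluations would only yield nonstrict bounds and no contradiction would follow. Beyond this careful sign-chasing, the argument avoids any hyperplane arrangement counts (Zaslavsky-type bounds) or geometric machinery, relying only on the linear dependence of $d+1$ vectors in $\mathbb{R}^d$.
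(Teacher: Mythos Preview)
Your argument is correct. The paper, however, does not give its own proof of this lemma: it simply cites it as Lemma~3 in \cite{ma2022uniform} and moves on. So there is no in-paper proof to compare against, and your self-contained Radon-type argument (linear dependence of $d+1$ vectors in $\mathbb{R}^d$, then sign-chasing against two shattering witnesses $T=P$ and $T'=\{1,\dots,d+1\}\setminus P$) fills that gap cleanly. Note that your argument never uses the restriction $x\in B^d$; it bounds the VC-dimension of the larger system indexed by all of $\mathbb{R}^d$, which is of course sufficient.
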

Finally, we will need the following packing lemma for set systems with bounded VC-dimension.
\begin{lemma}[Corollary 1 in \cite{haussler1995sphere}]\label{packing-VC-dimension-lemma}
    Let $(X,\mathcal{S})$ be a set system and $\mu$ a probability measure on the set $X$. Define a distance $d_\mu$ on the collection of subsets $\mathcal{S}$ by
    \begin{equation}
        d_{\mu}(S_1,S_2) = \mu(S_1\Delta S_2).
    \end{equation}
    In other words, $d_\mu$ is the probability that a randomly chosen element from the measure $\mu$ will be in one set but not the other.

    Let $\epsilon > 0$ and suppose that $S_1,...,S_N\in \mathcal{S}$ are such that $d_{\mu}(S_i,S_j) \geq \epsilon$ for all $i\neq j$. Then, if $(X,\mathcal{S})$ has VC-dimension at most $d$, we have
    \begin{equation}
        N \leq \left(\frac{C}{\epsilon}\right)^d
    \end{equation}
    for an absolute constant $C$ (we can take for instance $C = 50$).
\end{lemma}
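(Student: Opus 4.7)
The plan is to combine the Sauer--Shelah shattering lemma with a random sampling argument. I would draw $T = \{x_1, \ldots, x_m\}$ i.i.d.\ from $\mu$, where the sample size $m$ is a parameter to be fixed at the end. By Sauer--Shelah, since $(X, \mathcal{S})$ has VC-dimension at most $d$, the number of distinct traces $\{S \cap T : S \in \mathcal{S}\}$ is at most $\sum_{i=0}^d \binom{m}{i} \leq (em/d)^d$. On the other hand, for any two packed sets $S_i \neq S_j$, since $\mu(S_i \Delta S_j) \geq \epsilon$, the probability that $T$ avoids their symmetric difference (and hence fails to distinguish them) is $(1 - \mu(S_i \Delta S_j))^m \leq (1 - \epsilon)^m \leq e^{-\epsilon m}$.

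A direct union bound over the $\binom{N}{2}$ pairs shows that if $\binom{N}{2} e^{-\epsilon m} < 1$, then some realization of $T$ distinguishes all $N$ packed sets, in which case $N \leq (em/d)^d$. This naive approach requires $m \gtrsim (\log N)/\epsilon$ and yields only a bound of the form $N \leq (C d \log(1/\epsilon)/\epsilon)^d$, losing a $(\log(1/\epsilon))^d$ factor relative to the claimed estimate.

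Obtaining the sharp $(C/\epsilon)^d$ bound with an absolute constant $C$ requires the more delicate argument of Haussler. The rough idea is to replace the union bound with a second-moment analysis: one bounds the number of distinct traces $N_T$ both from above via Sauer--Shelah and from below via the Cauchy--Schwarz inequality $N^2 = \bigl(\sum_s |A_s|\bigr)^2 \leq N_T \sum_s |A_s|^2$, where $A_s = \{i : S_i \cap T = s\}$. Taking expectations and using $\mathbb{E}[\sum_s |A_s|^2] = \sum_{i,j} \Pr[S_i \cap T = S_j \cap T] \leq N + N(N-1)(1-\epsilon)^m$ produces a self-improving inequality relating $N$, $m$, and $\epsilon$. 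Coupled with a ``swapping''/ghost-sample reshuffling between two independent samples $T, T'$ (which upgrades the crude pair-collision bound into a sharper one that absorbs the $\log N$ overhead), this allows one to take $m$ of order $d/\epsilon$ and conclude $N \leq (C/\epsilon)^d$ with $C$ absolute.

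The main obstacle is precisely this sharpening: elementary probabilistic arguments cost a logarithmic factor, and Haussler's refinement --- which ultimately yields the stated absolute constant such as $C = 50$ --- hinges on carefully quantifying how pairs that are indistinguishable on $T$ must nonetheless contribute disproportionately to the growth function on $T \cup T'$, thereby closing the Sauer--Shelah estimate tightly. Adapting this swapping argument and tracking the absolute constants through the optimization in $m$ is the most delicate part of the proof.
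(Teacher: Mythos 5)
The paper does not actually prove this lemma; it is a direct citation to Corollary~1 in Haussler's paper \cite{haussler1995sphere}, and the discussion following the statement in the paper explicitly says so (noting also that the result for general $\mu$ is deduced there from the finite, counting-measure case). So there is nothing in the paper to compare your argument against line by line; the relevant comparison is with Haussler's original proof.

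Your first paragraph is correct: the straightforward Sauer--Shelah plus union-bound argument requires a sample of size $m \gtrsim \log N / \epsilon$ and produces Dudley's weaker bound with an extra $(\log(1/\epsilon))^d$ factor. Where the proposal has a genuine gap is in the sharpening. You gesture at a second-moment/Cauchy--Schwarz estimate combined with a ``swapping'' ghost-sample reshuffling, but you never carry this out, and in fact this is not the mechanism Haussler uses. The engine of Haussler's proof is a deterministic combinatorial density lemma about the \emph{unit-distance} (one-inclusion) graph: if $\mathcal{F}$ is a set system on a finite ground set with VC-dimension at most $d$, and $G(\mathcal{F})$ is the graph on $\mathcal{F}$ whose edges are pairs of sets at Hamming distance exactly one, then $G(\mathcal{F})$ has at most $d\,|\mathcal{F}|$ edges. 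One then draws a random sample $A$ of size $s$ together with one extra random point $a_{s+1}$, observes that each $\epsilon$-separated pair $\{S_i, S_j\}$ has probability at least $(1-\rho_{ij})^{s}\rho_{ij}$ of agreeing on $A$ but splitting on $a_{s+1}$ (hence contributing a unit-distance edge in $\mathcal{F}|_{A\cup\{a_{s+1}\}}$), and plays this expected edge count off against the density lemma and the Sauer--Shelah bound for $|\mathcal{F}|_{A\cup\{a_{s+1}\}}|$. Optimizing $s \asymp d/\epsilon$ then yields $N \le (C/\epsilon)^d$ with an absolute constant. Without the one-inclusion density lemma, the second-moment inequality $N^2 \le N_T \sum_s |A_s|^2$ you wrote down does not by itself remove the $\log N$ overhead --- it again forces $m \gtrsim \log N / \epsilon$ --- so as stated the proposal does not close the gap. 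Finally, the reduction from a general probability measure $\mu$ to the counting-measure case (via discretizing $\mu$ by a large i.i.d.\ sample and passing to the limit) is also a necessary step that should be made explicit if you intend this as a proof of the lemma as stated.
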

This was proved by Haussler in the case where $X$ is a finite set and $\mu$ is the counting measure \cite{haussler1995sphere}, which improves upon a weaker result (losing a logarithmic factor) obtained earlier by Dudley \cite{dudley1978central}. The generalization to arbitrary probability measures $\mu$ follows easily from this case and is given as a corollary in \cite{haussler1995sphere}. Similar results have also been noted in the case of certain geometric set systems in \cite{matousek1991cutting}. Using Lemma \ref{packing-VC-dimension-lemma}, we conclude this section with the proof of Lemma \ref{covering-set-ball-lemma}.
\begin{proof}[Proof of Lemma \ref{covering-set-ball-lemma}]
    Consider the set system given in \eqref{ball-set-system} and the probability measure $\mu$ defined by
    \begin{equation}\label{definition-of-mu}
        \mu = \frac{1}{2}\pi + \frac{1}{2}\pi_P,
    \end{equation}
    where $\pi$ is the uniform probability measure on $S^{d-1}\times [-1,1]$, and $\pi_P$ is the empirical measure associated to the set of halfspaces $P$, i.e.,
    \begin{equation}\label{definition-of-pi-P}
        \pi_P = \frac{1}{|P|}\sum_{\theta\in P} \delta_{\theta}
    \end{equation}
    where $\delta_{\theta}$ denotes the Dirac measure at the point $\theta := (\omega,b)\in S^{d-1}\times [-1,1]$. 
    
    Let $x_1,...,x_N\in B^d$ (viewed as elements of the set system $\mathcal{S}$) be a maximal set of points such that $d_{\mu}(x_i,x_j) \geq \delta / 2$. By Lemma \ref{packing-VC-dimension-lemma} and the VC-dimension bound in Lemma \ref{halfspace-VC-bound-lemma}, we have that
    \begin{equation}
        N \leq \left(\frac{2C}{\delta}\right)^d.
    \end{equation}
    Moreover, given an $z\in B^d$ there is an $x_i$ such that $d_\mu(x_i,z) < \delta / 2$ by the maximality of the set $x_1,...,x_N$. From the definition of $\mu$, this means that $d_\pi(x_i,z) < \delta$ and $d_{\pi_P}(x_i,z) < \delta$. Given the form \eqref{definition-of-pi-P} of $\pi_P$, $d_{\pi_P}(x_i,z) < \delta$ is equivalent to
    $$|P\cap \{(\omega,b)\in S^{d-1}\times [-1,1],~\sign{(\omega\cdot x_i + b)} \neq \sign{(\omega\cdot z + b)}\}| < \delta |P| = \delta N.$$
    On the other hand, $d_\mu(x_i,z) < \delta$ implies that
    \begin{equation}\label{expectation-bound-362}
        \mathbb{E}_{\omega\in S^{d-1}} \left[\mathbb{P}(x_i\cdot\omega < b <  z\cdot\omega~\text{or}~z\cdot\omega < b <  x_i\cdot\omega)\right] = \frac{1}{2}\mathbb{E}_{\omega\in S^{d-1}}|(x_i - z)\cdot \omega| < \delta,
    \end{equation}
    where the expectation denotes an average over the sphere $S^{d-1}$, and the probability is over a uniformly random $b\in [-1,1]$. It is well-known that for any fixed unit vector $w\in S^{d-1}$ we have
    \begin{equation}
        \mathbb{E}_{\omega\in S^{d-1}}|w\cdot \omega| \geq cd^{-1/2}
    \end{equation}
    for an absolute constant $c$. Together with \eqref{expectation-bound-362}, this implies that $|x_i - z| < C\delta\sqrt{d}$ as desired.
\end{proof}

\section{Approximation by Shallow ReLU$^k$ Neural Networks}\label{shallow-relu-network-approximation}
In this section, we give the proof of Theorem \ref{shallow-network-approximation-dist-result}.
The key to this proof is the following proposition. We remark that throughout this section, we will use $C$ to denote an unspecified constant which may change from line to line. We will indicate the dependence of this constant whenever it must be clarified.
\begin{proposition}\label{term-decrease-by-a-factor-reluk-proposition}
    Fix an integer $k \geq 0$ and set $c = 1/48$. For any probability distribution $\tau$ on $S^{d-1}\times [-1,1]$ which is supported on $N\geq 6$ points, there exists a probability distribution $\tau'$ supported on at most $(1-c)N$ points, such that for all multi-indices $\alpha$ with $|\alpha|\leq k$, we have
    \begin{equation}\label{proposition-1-bound-equation}
        \sup_{x\in B^d} \left|D_x^\alpha\left(\int_{S^{d-1}\times [-1,1]}\sigma_k(\omega\cdot x + b)d\tau(\omega,b) - \int_{S^{d-1}\times [-1,1]}\sigma_k(\omega\cdot x + b)d\tau'(\omega,b)\right)\right| \leq CN^{-\frac{1}{2}-\frac{2(k - |\alpha|) + 1}{2d}}.
    \end{equation}
    Here $D_x^\alpha$ denotes the $\alpha$-th order derivative with respect to $x$ and $C = C(d,k)$ is a constant depending only upon $d$ and $k$.
\end{proposition}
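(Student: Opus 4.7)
The plan is to carry out the discrepancy-theoretic halving strategy of Matou\v{s}ek~\cite{matouvsek1996improved}, adapted from the zonoid setting to the ReLU$^k$ dictionary and to uniform control of derivatives up to order $k$. After a standard preprocessing step (splitting and grouping atoms so that weights are roughly $1/N$) which can be absorbed into the constant $c$, it suffices to treat $\tau=\frac{1}{N}\sum_{i=1}^N\delta_{(\omega_i,b_i)}$ with uniform weights. The problem then reduces to producing a \emph{partial coloring} $\chi\in\{-1,0,+1\}^N$ with $\sum_i\chi_i=0$, at least $cN$ coordinates equal to $-1$, and
\[
\Bigl|\sum_{i=1}^N \chi_i\, D_x^\alpha\sigma_k(\omega_i\cdot x+b_i)\Bigr| \;\leq\; CN^{\frac{1}{2}-\frac{2(k-|\alpha|)+1}{2d}}
\]
uniformly over $x\in B^d$ and $|\alpha|\leq k$. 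Given such a $\chi$, the measure $\tau':=\frac{1}{N}\sum_i(1+\chi_i)\delta_{(\omega_i,b_i)}$ is a probability measure supported on at most $(1-c)N$ atoms, and dividing the displayed bound by $N$ recovers \eqref{proposition-1-bound-equation}.

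To construct $\chi$, I would combine a multi-scale chaining argument based on Lemma~\ref{covering-set-ball-lemma} with a Beck/Spencer-style partial coloring lemma. Set scales $\delta_m=2^{-m}$ for $m=0,1,\ldots,M$ with $\delta_M\sim N^{-1/d}$. Lemma~\ref{covering-set-ball-lemma} applied to the support $P=\{(\omega_i,b_i)\}$ supplies nets $\mathcal{N}_m\subset B^d$ of cardinality $(C/\delta_m)^d$ with the property that, between any two consecutive approximants $z_{m-1},z_m$ of a point $x$, at most $\delta_m N$ of the indices $i$ satisfy $\mathrm{sgn}(\omega_i\cdot z_m+b_i)\neq\mathrm{sgn}(\omega_i\cdot z_{m-1}+b_i)$, and on those indices $|\omega_i\cdot z+b_i|\lesssim\delta_m$, so $\sigma_{k-|\alpha|}(\omega_i\cdot z+b_i)\lesssim\delta_m^{k-|\alpha|}$. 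The resulting chaining-increment vectors $v_{\alpha,m,z}\in\mathbb{R}^N$ then obey
\[
\|v_{\alpha,m,z}\|_2 \;\lesssim\; \bigl(\delta_m^{\,2(k-|\alpha|)+1}\, N\bigr)^{1/2}.
\]
Feeding these vectors (over all $\alpha$, $m$, and $z\in\mathcal{N}_m$), together with the single linear functional $\sum_i\chi_i$, into the partial coloring lemma with target discrepancies $T_{\alpha,m}$ chosen so that $\sum_m T_{\alpha,m}$ matches the claimed right-hand side, one verifies that the total Spencer entropy $\sum_{\alpha,m}|\mathcal{N}_m|\,g(T_{\alpha,m}/\|v_{\alpha,m,z}\|_2)\leq c'N$, producing a $\chi$ with the required discrepancy and a constant fraction $c$ of fully colored atoms.

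The main technical obstacle is this joint entropy accounting over all scales and all derivative orders. In particular, the \emph{non-separating} (polynomial) contribution to each ReLU$^k$ increment needs either a moment-matching modification of $\chi$ or an additional decomposition of $\sigma_k$ into a polynomial piece (which can be cancelled exactly using $O(d^k)$ extra linear constraints) and a singular piece, in order to force the scale-$m$ increment norms down to the sharp level $\sim(\delta_m^{2(k-|\alpha|)+1}N)^{1/2}$. Once this is arranged, the geometric growth of the net cardinalities $(C/\delta_m)^d$ is balanced against the geometric decay of the increment standard deviations, so the entropy sum telescopes cleanly up to $\delta_M\sim N^{-1/d}$ without any $\sqrt{\log N}$ penalty of the kind incurred by a naive random-coloring-plus-Hoeffding bound. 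The smoothness factor $\delta_m^{k-|\alpha|}$ arising from the ReLU$^{k-|\alpha|}$ being small near the flipping hyperplane is precisely what produces the exponent $\frac{2(k-|\alpha|)+1}{2d}$, and the ability of the partial coloring lemma to handle many linear functionals simultaneously yields sharp rates in every $W^{|\alpha|}(L_\infty)$-norm up to order $k$ from a single coloring.
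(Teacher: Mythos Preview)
Your overall strategy---multi-scale covering via Lemma~\ref{covering-set-ball-lemma}, Beck--Spencer partial coloring, and entropy accounting over scales---is the same as the paper's. The genuine gap is the reduction to uniform weights, which you dismiss as ``standard preprocessing.'' It is not: splitting an atom of weight $a_i$ into $\approx a_i N$ equal copies does not produce exactly uniform weights, and any rounding error of order $1/M$ per atom forces $M\gg N$, destroying the support-size conclusion. Worse, even starting from uniform $\tau$, your output $\tau'=\frac{1}{N}\sum_i(1+\chi_i)\delta_{(\omega_i,b_i)}$ has weights in $\{0,1/N,2/N\}$; re-splitting the doubled atoms returns you to exactly $N$ atoms (since $\sum_i\chi_i=0$ forces $|\{\chi_i=+1\}|=|\{\chi_i=-1\}|$), so the iteration needed for Theorem~\ref{shallow-network-approximation-dist-result} makes no progress. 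Handling arbitrary weights is precisely the paper's new ingredient over \cite{matouvsek1996improved}: it first restricts to the set $S_-$ of atoms whose weight is at most the median (hence $\le 2/N$), then groups $S_-$ into \emph{triples} $\{u_j,v_j,w_j\}$ ordered by weight and colors the triples rather than individual atoms. A nonzero color on a triple deletes one atom and redistributes its mass onto the other two, preserving nonnegativity and total mass while keeping the per-triple increment bounded by $C2^{-l(k-m)}/N$ (since all three weights are $\le 2/N$). This device is what allows the proposition to be re-applied to its own output with no uniformity hypothesis and no logarithmic loss.

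A secondary issue: your treatment of the ``non-separating polynomial contribution'' is correct in spirit but not as stated. Imposing $O(d^k)$ global moment constraints on $\chi$ cannot kill the scale-$m$ polynomial piece, because that piece depends on which indices flip between $\pi_{l-1}(x)$ and $x$, not merely on global moments of the atoms. The paper's fix is to subtract, at each scale $l$, the $(k-m)$-th order Taylor polynomial of $\sigma_k^{(m)}(\cdot\,;\theta)$ about the coarser approximant, so that the scale-$l$ increment $\phi_{x,l}^m(\theta)$ defined in \eqref{definition-of-phi-m} vanishes identically on non-flipping indices; reconstructing $\sigma_k^{(m)}(x;\theta)$ from these Taylor-corrected increments across all scales and all derivative orders simultaneously is the content of the technical Lemma~\ref{relu-k-representation-lemma}, and is what makes the entropy sum telescope with the sharp exponent for every $m\le k$ from a single coloring.
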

We remark that the precise value of $c$ is not important to us, it is just important that $c\in (0,1)$ is a fixed constant. Our proof gives the specific value $c = 1/48$, although we have made no attempt to optimize this.

\begin{proof}[Proof of Theorem \ref{shallow-network-approximation-dist-result}]
    The first step is to approximate the probability distribution $\tau$ by a probability distribution with finite (but a priori unbounded) support. This follows the same argument used to prove Proposition 2.2 in \cite{yang2023optimal} and sketched in the remarks following \eqref{eq-221}. We consider drawing $N$ samples $(\omega_1,b_1),...,(\omega_N,b_N)$ independently from the distribution $\tau$ and forming the empirical average 
    \begin{equation}
        \frac{1}{N}\sum_{i=1}^N \sigma_k(\omega_i\cdot x + b_i).
    \end{equation}
    Using the fact that half-spaces have bounded VC-dimension, a standard uniform law of large numbers (see for instance \cite{vapnik1971uniform}) implies that
    \begin{equation}
        \lim_{N\rightarrow \infty} \mathbb{E}\left(\sup_{\substack{x\in B^d\\|\alpha|\leq k}} \left|D_x^\alpha\left(\frac{1}{N}\sum_{i=1}^N \sigma_k(\omega_i\cdot x + b_i) - \int_{S^{d-1}\times [-1,1]}\sigma_k(\omega\cdot x + b)d\tau(\omega,b)\right)\right|\right) = 0,
    \end{equation}
    where the expectation above is taken over the $N$ random samples $(\omega_1,b_1),...,(\omega_N,b_N)$. Thus, for any $\epsilon > 0$, there exists a sufficiently large $N$, and a realization of the samples $(\omega_1,b_1),...,(\omega_N,b_N)$ such that
    \begin{equation}
        \sup_{\substack{x\in B^d\\|\alpha|\leq k}} \left|D_x^\alpha\left(\frac{1}{N}\sum_{i=1}^N \sigma_k(\omega_i\cdot x + b_i) - \int_{S^{d-1}\times [-1,1]}\sigma_k(\omega\cdot x + b)d\tau(\omega,b)\right)\right| < \epsilon.
    \end{equation}
    By choosing $\epsilon$ sufficiently small, it suffices to consider the case where $\tau$ is supported on a finite (though a priori unbounded) number of points.
    
    We repeatedly apply Proposition \ref{term-decrease-by-a-factor-reluk-proposition} to the distribution $\tau_0 := \tau$ to obtain a sequence of distributions $\tau_0, \tau_1,\tau_2,...$. Letting $\|\tau_j\|_0$ denote the size of the support of the (finite) distribution $\tau_j$, Proposition \ref{term-decrease-by-a-factor-reluk-proposition} implies that $\|\tau_{j+1}\|_0 \leq (1-c)\|\tau_j\|_0$, and that for all $|\alpha| \leq k$ we have
    \begin{equation}\label{iterative-error-bound-435}
        \sup_{x\in B^d} \left|D_x^\alpha\left(\int_{S^{d-1}\times [-1,1]}\sigma_k(\omega\cdot x + b)d\tau_j(\omega,b) - \int_{S^{d-1}\times [-1,1]}\sigma_k(\omega\cdot x + b)d\tau_{j+1}(\omega,b)\right)\right| \leq C\|\tau_j\|_0^{-\frac{1}{2}-\frac{2(k - |\alpha|) + 1}{2d}}.
    \end{equation}
    By adjusting the constant, it suffices to consider the case where $n \geq 6$. We set $\tau' = \tau_i$ where $i$ is the smallest index such that $\|\tau_i\|_0 \leq n$. By summing up the error bounds in \eqref{iterative-error-bound-435}, we see that for any $|\alpha| \leq k$ the left-hand side of \eqref{proposition-1-bound-equation} is bounded by
    \begin{equation}
        C\sum_{j=0}^{i-1} \|\tau_j\|_0^{-\frac{1}{2}-\frac{2(k - |\alpha|) + 1}{2d}}.
    \end{equation}
    Since by construction $\|\tau_{i-1}\|_0 > n$ and $\|\tau_{j+1}\|_0 \leq (1-c)\|\tau_j\|_0$ it follows inductively that $\|\tau_j\|_0 > (1-c)^{j-i+1}n$ for $j=0,...,i-1$. This gives the bound
    \begin{equation}
    \begin{split}
        Cn^{-\frac{1}{2}-\frac{2(k-|\alpha|)+1}{2d}}\left(\sum_{j=0}^{i-1} (1-c)^{(i-j-1)\left(\frac{1}{2}+\frac{2(k - |\alpha|) + 1}{2d}\right)}\right) &\leq Cn^{-\frac{1}{2}-\frac{2(k-|\alpha|)+1}{2d}}\left(\sum_{j=0}^{\infty} (1-c)^{j\left(\frac{1}{2}+\frac{2(k - |\alpha|) + 1}{2d}\right)}\right)\\
        &\leq Cn^{-\frac{1}{2}-\frac{2(k-|\alpha|)+1}{2d}},
    \end{split}
    \end{equation}
    as desired.
\end{proof}
\begin{proof}[Proof of Theorem \ref{shallow-network-approximation-theorem}]
    Suppose without loss of generality that $\|f\|_{\mathcal{K}_1(\mathbb{P}_k^d)} \leq 1$, i.e. that $f\in B_1(\mathbb{P}_k^d)$.
    
    By definition, this means that for any $\epsilon > 0$ there exist parameters $(\omega_1,b_1),...,(\omega_N,b_N)\in S^d\times [-1,1]$ and weights $a_1,...,a_N\in \mathbb{R}$ (for a sufficiently large $N$) such that
    \begin{equation}\label{f-approximation-273}
        \left\|f - \sum_{i=1}^N a_i\sigma_k(\omega_i\cdot x + b_i)\right\|_{W^{k}(L_\infty(S^d))} < \epsilon,
    \end{equation}
    and $\sum_{i=1}^N |a_i| \leq 1$. The next step is to approximate the sum in \eqref{f-approximation-273} by an element in $\Sigma_n^1(\mathbb{P}_k^d)$. To do this, we split the sum into its positive and negative parts, i.e., we write
    \begin{equation}\label{positive-negative-decomposition-277}
        \sum_{i=1}^N a_i\sigma_k(\omega_i\cdot x + b_i) = \sum_{a_i > 0} a_i\sigma_k(\omega_i\cdot x + b_i) - \sum_{a_i < 0} |a_i|\sigma_k(\omega_i\cdot x + b_i).
    \end{equation}
    By considering the positive and negative pieces separately, we essentially reduce to the case where all $a_i$ are positive. In this case, the sum can be written
    \begin{equation}
        \sum_{i=1}^N a_i\sigma_k(\omega_i\cdot x + b_i) = \int_{S^{d-1}\times [-1,1]}\sigma_k(\omega\cdot x + b)d\tau(\omega,b)
    \end{equation}
    for a probablity measure $\tau$ supported on at most $N$ points. 

    Applying Theorem \ref{shallow-network-approximation-dist-result} gives an $f_n\in \Sigma_{n/2}^1(\mathbb{P}_k^d)$ such that
    \begin{equation}
        \left\|f_n - \sum_{i=1}^N a_i\sigma_k(\omega_i\cdot x + b_i)\right\|_{W^{m}(L_\infty(S^d))} \leq Cn^{-\frac{1}{2}-\frac{2(k-m) + 1}{2d}},
    \end{equation}
    whenever $a_i \geq 0$ and $\sum_{i=1}^N a_i = 1$. Applying this to the positive and negative parts in \eqref{positive-negative-decomposition-277} and summing them gives an $f_n\in \Sigma_{n}^1(\mathbb{P}_k^d)$ such that
    \begin{equation}
        \left\|f - f_n\right\|_{W^{m}(L_\infty(S^d))} \leq Cn^{-\frac{1}{2}-\frac{2(k-m) + 1}{2d}} + \epsilon.
    \end{equation}
    Since $\epsilon > 0$ was arbitrary, this completes the proof.
\end{proof}

It remains to prove Proposition \ref{term-decrease-by-a-factor-reluk-proposition}. The proof utilizes the ideas of geometric discrepancy theory and borrows many ideas from the proof of Proposition 9 in \cite{matouvsek1996improved}. However, Proposition 9 in \cite{matouvsek1996improved} only deals with uniform distributions, and a few key modifications are required to deal with the case of `unbalanced' distributions $\tau$, which enables us to remove the logarithmic factors in all dimensions in Theorems \ref{optimal-zonoid-approximation-theorem}, \ref{shallow-network-approximation-dist-result}, and \ref{shallow-network-approximation-theorem}. In addition, dealing with the higher order smoothness of the ReLU$^k$ activation function introduces significant technical difficulties.

    We first introduce some notation. Throughout the proof, we will write $\theta := (\omega,b)\in S^{d-1}\times [-1,1]$ as shorthand for a pair of parameters $\omega$ and $b$.
    
    We will also need to work with tensors in order to handle higher order derivatives of multivariate functions. Our tensors will be defined on the space $\mathbb{R}^{d}$, so let $I := \{1,...,d\}$ denote the relevant indexing set. A tensor $X$ of order $m$ is simply an array of numbers indexed by a tuple $\textbf{i}\in I^m$. Note that vectors in $\mathbb{R}^{d}$ are tensors of order one. We adopt the $\ell^\infty$ norm on the space of degree $m$ tensors, i.e.,
    \begin{equation}
        \|X\| := \max_{\textbf{i}\in I^m} |X_\textbf{i}|.
    \end{equation}  
    
    Given tensors $X$ and $Y$ of orders $m_1$ and $m_2$, their tensor product, which is a tensor of order $m_1 + m_2$, is defined in the standard way by
    \begin{equation}
        (X\otimes Y)_{\textbf{i}\textbf{j}} = X_{\textbf{i}}Y_{\textbf{j}},
    \end{equation}
    where $\textbf{i}\in I^{m_1}$, $\textbf{j}\in I^{m_2}$ and $\textbf{i}\textbf{j}$ denotes concatenation. We will also write $X^{\otimes r}$ for the $r$-fold tensor product of $X$ with itself.
    Supposing that $m_1 \geq m_2$, we define the contraction, which is a tensor of order $m_1 - m_2$ by
    \begin{equation}
        \langle X,Y\rangle_{\textbf{i}} = \sum_{\textbf{j}\in I^{m_2}} X_{\textbf{i}\textbf{j}}Y_{\textbf{j}}.
    \end{equation}
    Note that since we will be exclusively working with $\mathbb{R}^{d}$ with the standard inner product, to simplify the notation and presentation we will not make the distinction between covariance and contravariance in the following. 
    
    We remark that repeated contraction can be written in terms of the tensor product in the following way
    \begin{equation}\label{contraction-tensor-product-identity}
        \langle \langle X,Y\rangle, Z\rangle = \langle X, Y\otimes Z\rangle,
    \end{equation}
    and also note the inequality (for tensors of degree $m \leq k$)
    \begin{equation}\label{contraction-l-infty-inequality}
        \left\|\langle X,Y\rangle\right\| \leq C\|X\|\|Y\|,
    \end{equation}
    where $C = C(d,k) = d^k$.
    
    Given an order $0 \leq m\leq k$, we denote the $m$-th derivative (tensor) of the ReLU$^k$ function (with $\theta := (\omega,b)$ fixed) by
    \begin{equation}\label{sigma-k-m-definition}
        \sigma_k^{(m)}(x;\theta) = D_x^m[\sigma_k(\omega\cdot x + b)] = \begin{cases}
            \frac{k!}{(k-m)!}(\omega\cdot x + b)^{k-m}\omega^{\otimes m} & \omega\cdot x + b \geq 0\\
            0 & \omega\cdot x + b < 0.
        \end{cases}
    \end{equation}
    In order to deal with the degree $k$ smoothness of the activation function we will need to utilize higher-order Taylor polynomials. Given points $x_1,x_2\in S^d$, parameters $\theta\in S^{d-1}\times [-1,1]$, an order $0\leq m\leq k$, and a number of terms $0\leq r\leq k-m$, we denote by
    \begin{equation}\label{taylor-expansion-definition}
        \mathcal{T}^{m,r}_{x_1}(x_2;\theta) := \sum_{q=0}^r \frac{1}{q!}\left\langle\sigma_k^{(m+q)}(x_1;\theta), (x_2 - x_1)^{\otimes q}\right\rangle
    \end{equation}
    the $r$-th order Taylor polynomials of $\sigma_k^{(m)}(x;\theta)$ around $x_1$ evaluated at $x_2$.

\begin{proof}[Proof of Proposition \ref{term-decrease-by-a-factor-reluk-proposition}]
    Note that since $\tau$ is supported on $N$ points 
    the integral which we are trying to approximate in Proposition \ref{term-decrease-by-a-factor-reluk-proposition} is given by (recall here $\theta := (\omega,b)$)
    \begin{equation}\label{sum-representation-146}
        \int_{S^{d-1}\times [-1,1]}\sigma_k(\omega\cdot x + b)d\tau(\omega,b) = \sum_{\theta\in S} a_{\theta}\sigma_k(\omega\cdot x + b),
    \end{equation}
    where $S\subset S^{d-1}\times [-1,1]$ with $|S| = N$, and the coefficients $a_{\theta}$ satisfy $a_{\theta}\geq 0$ and  $\sum_{\theta\in S} a_{\theta} = 1$.
    
    Let $M$ denote the median of the coefficients $a_{\theta}$ and set
    $$
    S_- = \{\theta\in S:~a_{\theta} \leq M\},~~S_+ = \{\theta\in S:a_{\theta} > M\}.
    $$
    This gives a decomposition of the sum in \eqref{sum-representation-146} in terms of its large and small coefficients
    \begin{equation}
        \sum_{\theta\in S} a_{\theta}\sigma_k(\omega\cdot x + b) = \sum_{\theta\in S_-} a_{\theta}\sigma_k(\omega\cdot x + b) + \sum_{\theta\in S_+} a_{\theta}\sigma_k(\omega\cdot x + b).
    \end{equation}
    We will leave the second sum, i.e., the large sum, untouched and approximate the small sum by
    \begin{equation}\label{term-count-decrease-approximation}
        \sum_{\theta\in S_-} a_{\theta}\sigma_k(\omega\cdot x + b) \approx \sum_{\theta\in T} b_{\theta}\sigma_k(\omega\cdot x + b),
    \end{equation}
    where $T\subset S_-$ and the $b_\theta\geq 0$ are new coefficients such that the following conditions are satisfied:
    \begin{enumerate}
        \item $|T| \leq (1-c)|S_-|$ (our argument will give here $c = 1/24$)
        \item $\sum_{\theta\in T}b_{\theta} = \sum_{\theta\in S_-} a_{\theta}$
        \item The error of approximation satisfies (using the $\ell^\infty$-tensor norm we have introduced)
    \begin{equation}\label{error-bound-162}
        \sup_{x\in S^d}\left\|\sum_{\theta\in S_-} a_{\theta}\sigma_k^{(m)}(x;\theta) - \sum_{\theta\in T} b_{\theta}\sigma_k^{(m)}(x;\theta)\right\| \leq CN^{-\frac{1}{2} - \frac{2(k-m)+1}{2d}}
    \end{equation}
    for $m=0,...,k$.
    \end{enumerate}
We will then complete the proof by setting (here $\delta_{\theta}$ denotes the Dirac delta distribution at $\theta$)
    \begin{equation}
        \tau' = \sum_{\theta\in T}b_{\theta} \delta_{\theta} + \sum_{\theta\in S_+} a_{\theta}\delta_{\theta}.
    \end{equation}
    Since $|S_-| \geq N/2$, the first condition above ensures that the support of $\tau'$ is at most $(1-\frac{c}{2})N$, while the second condition ensures that $\tau'$ is still a probability distribution, and the third condition gives the desired error bound.

    We now turn to the heart of the proof, which is constructing an approximation \eqref{term-count-decrease-approximation} satisfying the three conditions given above. Note first that by construction, we have
    \begin{equation}\label{bound-on-coefficients-small-half}
        \max_{\theta\in S_-} a_\theta \leq M \leq \frac{2}{N},
    \end{equation}
    i.e., all of the coefficients in the small half are at most $2/N$. This holds since at least half (i.e. at least $N/2$) of the $a_\theta$ are at least as large as the median $M$ and $\sum_{\theta\in S} a_{\theta} = 1$.

    Next, we construct a multiscale covering of the ball using Lemma \ref{covering-set-ball-lemma}. For $l=1,...,L$ with $2^L > N$ we apply Lemma \ref{covering-set-ball-lemma} with $P = S_-$ and $\delta = 2^{-l}$ to obtain a sequence of sets $N_l\subset B^d$ with $|N_l| \leq (C2^{l})^d$ such that for any $x\in B^d$ there exists a $z\in N_l$ with $|x-z| \leq C2^{-l}\sqrt{d}$ and
    \begin{equation}\label{theta-number-bound-582}
        |\{\theta\in S_-:\sign{(\omega\cdot x + b)} \neq \sign{(\omega\cdot z + b)}\}| \leq 2^{-l} |S_-| \leq 2^{-l}N.
    \end{equation}
    Given a point $x\in S^d$, we denote by $\pi_l(x)$ the point $z\in N_l$ satisfying these properties (if this point is not unique we choose one arbitrarily for each $x$). 

    For each level $l=1,...,L$, each point $x\in N_l$, and each index $m=0,..,k$ we consider the function
    \begin{equation}\label{definition-of-phi-m}
        \phi_{x,l}^m(\theta) = \begin{cases}
            \sigma_k^{(m)}(x;\theta) - \mathcal{T}^{m,k-m}_{\pi_{l-1}(x)}(x;\theta) & l \geq 2\\
            \sigma_k^{(m)}(x;\theta) & l=1,
        \end{cases}
    \end{equation}
    where $\mathcal{T}^{m,k-m}_{\pi_{l-1}(x)}(x;\theta)$ is the $(k-m)$-th order Taylor polynomial of $\sigma_k^{(m)}(x;\theta)$ defined in \eqref{taylor-expansion-definition}.

    We note the following bounds on $\phi_{x,l}^m(\theta)$. First, if $\sign{(\omega\cdot x + b)} = \sign{(\omega\cdot \pi_{l-1}(x) + b)}$ (for $l \geq 2$), then
    \begin{equation}\label{phi-bound-zero}
        \phi_{x,l}^m(\theta) = 0.
    \end{equation}
    This holds since on the half space $\{x:\omega\cdot x + b\geq 0\}$ the function $\sigma_k^{(m)}(x;\theta)$ is a polynomial of degree $k-m$ in $x$. Thus on this half-space it is equal to its $(k-m)$-th order Taylor polynomial about any point. So if $x$ and $\pi_{l-1}(x)$ both lie in this half-space, then the difference in \eqref{definition-of-phi-m} vanishes. On the other hand, if $x$ and $\pi_{l-1}(x)$ both lie in the complement, then all terms in \eqref{definition-of-phi-m} are $0$.

    On the other hand, for any $x\in B^d$ and $\theta\in S^{d-1}\times [-1,1]$ we have the bound
    \begin{equation}\label{bound-on-phi-nonzero}
        \|\phi_{x,l}^m(\theta)\| \leq C2^{-l(k-m)},
    \end{equation}
    where $C = C(d,k)$. This holds since $\sigma_k^{(m)}(x;\theta)$ (as a function of $x$) has $(k-m)$-th order (weak) derivatives which are bounded by $C(k) = k!2^{k-m}$ for $x\in B^d$ by \eqref{sigma-k-m-definition}. Thus, using Taylor's theorem the difference in \eqref{definition-of-phi-m} is bounded by
    \begin{equation}
        \left\|\sigma_k^{(m)}(x;\theta) - \mathcal{T}^{m,k-m}_{\pi_{l-1}(x)}(x;\theta)\right\| \leq C|x - \pi_{l-1}(x)|^{k-m} \leq C2^{-l(k-m)},
    \end{equation}
    for $C = C(d,k)$. When $l=1$ we also trivially obtain the bound \eqref{bound-on-phi-nonzero}, since $\sigma_k^{(m)}(x,\theta)$ is bounded for $x\in B^d$ (note that $C$ depends on $k$).
    
    The next step is to decompose the functions $\sigma_k^{(m)}(x;\theta)$ with $\theta\in S_-$ in terms of the $\phi_{x,l}^m(\theta)$. This is captured in the following technical lemma.
    \begin{lemma}\label{relu-k-representation-lemma}
    Let $\phi_{x,l}^m$ be defined by \eqref{definition-of-phi-m}. For $x\in B^d$ define $x_L = \pi_L(x)$ and $x_l = \pi_l(x_{l+1})$ for $1 \leq l < L$. 
    
    Then for any $m=0,...,k$, $x\in B^d$ and $\theta\in S_-$ we have
    \begin{equation}\label{sigma-decomposition-reluk}
        \sigma_k^{(m)}(x;\theta) = \sum_{l=1}^L\phi^{m}_{x_j,j}(\theta) + \sum_{i=1}^{k-m}\sum_{l=1}^L  \left\langle\phi^{m+i}_{x_l,l}(\theta),\Gamma^{m}_{i,l}(x)\right\rangle,
    \end{equation}
    for a collection of tensors $\Gamma^{m}_{i,l}(x)$ depending upon $x$ which satisfy the bound
        \begin{equation}\label{Gamma-x-tensor-bound}
            \|\Gamma^{m}_{i,l}(x)\| \leq C2^{-il},
        \end{equation}
        for a constant $C(d,k)$.
    \end{lemma}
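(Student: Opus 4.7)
The plan is to prove the stated identity by forward induction on $l$ at the level of the full Taylor polynomials $\mathcal{T}^{m,k-m}_{x_l}(x;\theta)$, and then invoke the choice of $L$ with $2^L > N$ to convert the top-level Taylor polynomial into $\sigma_k^{(m)}(x;\theta)$ itself. The key initial observation is that when $\delta = 2^{-L}$ the bound $2^{-L}N < 1$ in Lemma \ref{covering-set-ball-lemma} (applied with $P = S_-$) forces the sign-switching set to be empty, so $\sign(\omega\cdot x + b) = \sign(\omega\cdot x_L + b)$ for every $\theta = (\omega,b) \in S_-$. On whichever half-space the two points share, $\sigma_k^{(m)}(y;\theta)$ is either identically zero or a polynomial in $y$ of degree $k-m$, and hence coincides with its order-$(k-m)$ Taylor polynomial centered at $x_L$, giving the exact identity $\sigma_k^{(m)}(x;\theta) = \mathcal{T}^{m,k-m}_{x_L}(x;\theta)$.

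Next I would establish by induction on $l$ the refined identity
\[
\mathcal{T}^{m,k-m}_{x_l}(x;\theta) = \sum_{l'=1}^{l} \sum_{r=0}^{k-m} \frac{1}{r!}\left\langle \phi^{m+r}_{x_{l'},l'}(\theta),\,(x - x_{l'})^{\otimes r}\right\rangle.
\]
The base case $l=1$ is immediate from $\phi^{m+r}_{x_1,1}(\theta) = \sigma_k^{(m+r)}(x_1;\theta)$. For the inductive step I would substitute the definitional relation $\sigma_k^{(m+r)}(x_l;\theta) = \phi^{m+r}_{x_l,l}(\theta) + \mathcal{T}^{m+r,k-m-r}_{x_{l-1}}(x_l;\theta)$ into $\mathcal{T}^{m,k-m}_{x_l}(x;\theta)$; the $\phi$-terms produce the new level-$l$ contribution, while the remaining double sum must be shown to collapse exactly to $\mathcal{T}^{m,k-m}_{x_{l-1}}(x;\theta)$, at which point the induction hypothesis finishes the step.

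The main obstacle will be verifying this collapse. After reindexing by the total Taylor order $t = r + s$ and applying \eqref{contraction-tensor-product-identity}, what is needed is the tensor-algebraic identity
\[
\sum_{r=0}^{t}\frac{1}{r!\,(t-r)!}\left\langle\sigma_k^{(m+t)}(x_{l-1};\theta),\,(x_l - x_{l-1})^{\otimes(t-r)}\otimes(x - x_l)^{\otimes r}\right\rangle = \frac{1}{t!}\left\langle\sigma_k^{(m+t)}(x_{l-1};\theta),\,(x - x_{l-1})^{\otimes t}\right\rangle,
\]
which I would justify by noting that $\sigma_k^{(m+t)}(x_{l-1};\theta)$ is a scalar multiple of the fully symmetric tensor $\omega^{\otimes(m+t)}$ (or is zero). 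Contracting either side against it reduces to a scalar binomial expansion of $(\omega\cdot(x_l - x_{l-1}) + \omega\cdot(x - x_l))^t = (\omega\cdot(x - x_{l-1}))^t$, which reassembles into the Taylor polynomial centered at $x_{l-1}$.

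Taking $l = L$ and combining with the first step then yields the desired representation with $\Gamma^m_{i,l}(x) := \frac{1}{i!}(x - x_l)^{\otimes i}$ for $i \geq 1$. The tensor-norm bound reduces to $|x - x_l| \leq C(d)\,2^{-l}$, which I would obtain by telescoping the triangle inequality along $x = x_{L+1}, x_L, \ldots, x_l$ and using $|x_{j+1} - x_j| \leq C 2^{-j}\sqrt{d}$ from the first conclusion of Lemma \ref{covering-set-ball-lemma}, yielding $|x - x_l| \leq C\sqrt{d}\sum_{j \geq l} 2^{-j} \leq 2C\sqrt{d}\cdot 2^{-l}$ and hence $\|\Gamma^m_{i,l}(x)\| \leq C(d,k)\,2^{-il}$ as required.
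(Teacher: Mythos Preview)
Your argument is correct and in fact cleaner than the paper's. The paper proceeds by \emph{reverse} induction on $m$: it first establishes an auxiliary decomposition of $\sigma_k^{(m)}(x_l;\theta)$ for each level $l$ (equation \eqref{sigma-l-decomposition-reluk}), with tensors $\Gamma^{m,l}_{i,j}$ defined and bounded recursively, and only afterwards passes to $x$ via the Taylor expansion at $x_L$. You instead induct forward on $l$ at the level of the full Taylor polynomial $\mathcal{T}^{m,k-m}_{x_l}(x;\theta)$, and the crucial difference is your use of the specific rank-one structure $\sigma_k^{(m+t)}(x_{l-1};\theta)=c\,\omega^{\otimes(m+t)}$: contracting a fully symmetric tensor against $(x_l-x_{l-1})^{\otimes(t-r)}\otimes(x-x_l)^{\otimes r}$ reduces to a scalar binomial identity, which collapses the double sum exactly to $\mathcal{T}^{m,k-m}_{x_{l-1}}(x;\theta)$. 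This buys you the explicit formula $\Gamma^{m}_{i,l}(x)=\tfrac{1}{i!}(x-x_l)^{\otimes i}$, whereas the paper's $\Gamma$'s are only given recursively and then bounded by a separate inductive estimate. The paper's route has the mild advantage of not invoking the rank-one form of the higher derivatives (so it would transfer to more general piecewise-polynomial ridge dictionaries), but for the ReLU$^k$ setting at hand your approach is shorter and yields a sharper structural statement.
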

    To give some intuition on the meaning of Lemma \ref{relu-k-representation-lemma}, we remark that when $m=k$ it reduces to the telescoping sum
    \begin{equation}
        \sigma_k^{(k)}(x;\theta) = \sigma_k^{(k)}(x_1;\theta) + \sum_{l=2}^L\left(\sigma_k^{(k)}(x_l;\theta) - \sigma_k^{(k)}(x_{l-1};\theta)\right) = \sigma_k^{(k)}(x_L;\theta).
    \end{equation}
    This equality holds since $x$ and $x_L$ lie on the same side of every hyperplane defined by a $\theta\in S_-$ (because $2^{-L}N < 1$ and so the cardinality on the right-hand side of \eqref{theta-number-bound-582} must be $0$).
    
    In order to incorporate higher order smoothness, we must consider differences between $\sigma_k^{(m)}$ and its Taylor expansion around the points $x_l$, which results in the technical statement of the lemma.
    The proof of Lemma \ref{relu-k-representation-lemma} is a technical, but relatively straightforward calculation, so we postpone it until the end of this section and complete the proof of Proposition \ref{term-decrease-by-a-factor-reluk-proposition} first.

    Using the decomposition \eqref{sigma-decomposition-reluk}, we write the LHS of \eqref{error-bound-162} as
    \begin{equation}
    \begin{split}
        \sup_{x\in S^d}\left\|\sum_{l=1}^L\left[\sum_{\theta\in S_-} a_{\theta}\phi^{m}_{x_l,l}(\theta) - \sum_{\theta\in T} b_{\theta}\phi^{m}_{x_l,l}(\theta)\right]\right.+ \left.\sum_{l=1}^L\sum_{i=1}^{k-m}\left\langle\sum_{\theta\in S_-} a_{\theta}\phi^{m+i}_{x_l,l}(\theta) - \sum_{\theta\in T} b_{\theta}\phi^{m+i}_{x_l,l}(\theta), \Gamma^{m}_{i,l}(x)\right\rangle\right\|.
    \end{split}
    \end{equation}
    Using the triangle inequality, the bound \eqref{contraction-l-infty-inequality}, and the bound \eqref{Gamma-x-tensor-bound}, we see that it suffices to find the subset $T\subset S_-$ with $|T| \leq (1-c)|S_-|$, and new coefficients $b_{\theta}\geq 0$ satisfying $\sum_{\theta\in T}b_{\theta} = \sum_{\theta\in S_-} a_{\theta}$ such that for $m=0,...,k$ we have
    \begin{equation}\label{error-decomposition-241}
        \sum_{l=1}^L \sum_{i=0}^{k-m} 2^{-il}\sup_{x\in N_l}\left\|\sum_{\theta\in S_-} a_{\theta}\phi^{m+i}_{x,l}(\theta) - \sum_{\theta\in T} b_{\theta}\phi^{m+i}_{x,l}(\theta)\right\| \leq CN^{-\frac{1}{2} - \frac{2(k-m) + 1}{2d}}
    \end{equation}
    for a constant $C = C(d,k)$.

    To find this set $T$ and new coefficients $b_{\theta}$, we choose disjoint subsets $P_1,..,P_t\subset S_-$ of size $3$ which together cover at least half of the elements of $S_-$, i.e., such that
    \begin{equation}\label{number-of-P-lower-bound}
        \left|\bigcup_{i=1}^t P_i\right| \geq |S_-|/2.
    \end{equation}
    (Note that here we need $|S_-| \geq 3$ which follows from $N \geq 6$.)
    We denote the three elements of each set $P_j$ by
    $$
        P_j = \{u_j,v_j,w_j\},
    $$
    which are ordered so that the coefficients satisfy $0 \leq a_{u_j}\leq a_{v_j}\leq a_{w_j}$. 

    Based upon the partition $P_1,...,P_t$, we will use a modification of the partial coloring argument given in \cite{matouvsek1996improved} (the idea is originally due to Spencer \cite{spencer1985six} and Beck \cite{beck1984some}). The key difference is in how we use the partial coloring to reduce the number of terms in the sum over $S_-$.

    Given a partial coloring $\chi:\{1,...,t\}\rightarrow \{-1,0,1\}$, we transform the sum $\sum_{\theta\in S_-} a_{\theta} \sigma_k(\omega\cdot x + b)$ in the following way:
    \begin{itemize}
        \item If $\chi(j) = 1$, we remove the term corresponding to $u_j$, double the coefficient $a_{v_j}$ of the term corresponding to $v_j$, and add the difference $a_{u_j} - a_{v_j}$ to the coefficient $a_{w_j}$ of the term corresponding to $w_j$.
        \item If $\chi(j) = -1$, we do the same but reverse the roles of $u_j$ and $v_j$.
        \item If $\chi(j) = 0$, we leave the terms corresponding to $u_j,v_j$ and $w_j$ unchanged.
    \end{itemize}

    This results in a transformed sum $\sum_{\theta\in T} b_{\theta} \sigma_k(\omega\cdot x + b)$ over a set $T\subset S_-$ and with coefficients $b_{\theta}$ for $\theta\in T$ described as follows. Let
    \begin{equation}
        R_j = \begin{cases}
            \emptyset & \chi(j) = 0\\
            \{u_j\} & \chi(j) = 1\\
            \{v_j\} & \chi(j) = -1,
        \end{cases}
    \end{equation}
    denote the removed set for each $P_j$.
    Then the set $T$ is given by
    \begin{equation}
        T = S_-\setminus \left(\bigcup_{j=1}^t R_j\right),
    \end{equation}
    and for $\theta\in T$ the coefficients $b_{\theta}$ are given by
    \begin{equation}
        b_{\theta} = \begin{cases}
            a_{\theta} & \theta\notin \bigcup_{j}P_j\\
            (1+\chi(j))a_{\theta} & \theta = v_j\\
            (1-\chi(j))a_{\theta} & \theta = u_j\\
            a_{\theta} + \chi(j)(a_{u_j} - a_{v_j}) & \theta = w_j.
        \end{cases}
    \end{equation}
    We have constructed this transformation so that $$\sum_{\theta\in T} b_{\theta} = \sum_{\theta\in S_-} a_{\theta},$$
    the $b_\theta \geq 0$ since $w_j$ has the largest coefficient among the elements of $P_j$, and for any $x\in S^d$ the error in the $m$-th derivative is given by
    \begin{equation}
    \begin{split}
        \sum_{\theta\in S_-} a_{\theta}\sigma_k^{(m)}(x;\theta)& - \sum_{\theta\in T} b_{\theta}\sigma_k^{(m)}(x;\theta) =\\
        &\sum_{j=1}^t \chi(j)\left[-a_{u_j}\sigma_k^{(m)}(x;u_j) + a_{v_j}\sigma_k^{(m)}(x;v_j) + (a_{u_j} - a_{v_j})\sigma_k^{(m)}(x;w_j)\right].
    \end{split}
    \end{equation}
    Using the linearity of the derivative and the definition of the Taylor polynomial \eqref{taylor-expansion-definition}, this implies that for any $x\in S^d$, any level $l = 1,...,L$, and any $m=0,...,k$ we have
    \begin{equation}\label{error-formula-306}
        \sum_{\theta\in S_-} a_{\theta} \phi_{x,l}^m(\theta) - \sum_{\theta\in T} b_{\theta}\phi_{x,l}^m(\theta) = \sum_{j=1}^t \chi(j) \Psi^m_{x,l,j},
    \end{equation}
    where we have defined
    \begin{equation}\label{definition-of-psi}
    \begin{split}
        \Psi^m_{x,l,j} = -a_{u_j}\phi^m_{x,l}(u_j) + a_{v_j}\phi^m_{x,l}(v_j) + (a_{u_j} - a_{v_j})\phi^m_{x,l}(w_j).
    \end{split}
    \end{equation}
    Further, for any index $j$ such that $\chi(j) \neq 0$, we have eliminated one term (either $u_j$ or $v_j$) from the sum. Thus
    \begin{equation}\label{T-decrease-equation-317}
        |T| = |S| - |\{j:~\chi(j) \neq 0\}|.
    \end{equation}    
    
    We proceed to find a partial coloring $\chi:\{1,...,t\}\rightarrow \{-1,0,1\}$ with a positive fraction of non-zero entries, i.e., with $|\{j:~\chi(j) \neq 0\}| \geq ct$, such that for $m=0,...,k$
    \begin{equation}\label{partial-coloring-bound}
        \sum_{l=1}^L \sum_{i=0}^{k-m} 2^{-il}\sup_{x\in N_l} \left\|\sum_{j=1}^t\chi(j)\Psi^{m+i}_{x,l,j}\right\| \leq CN^{-\frac{1}{2} - \frac{2(k-m)+1}{2d}},
    \end{equation}
    for a constant $C = C(d,k)$. We remark that ultimately our argument will give $c = 1/4$.

    By \eqref{error-formula-306} this will guarantee that the LHS in \eqref{error-decomposition-241} is sufficiently small, and by \eqref{T-decrease-equation-317} this will guarantee that the set $T$ is small enough, since by \eqref{number-of-P-lower-bound}
    \begin{equation}
        |T| = |S| - |\{j:~\chi(j) \neq 0\}| \leq |S| - ct \leq \left(1 - \frac{c}{6}\right)|S_-|.
    \end{equation}
    The existence of such a partial coloring $\chi$ follows from a well-known technique in discrepancy theory called the partial coloring method.

    Given a (total) coloring $\epsilon:\{1,...,t\}\rightarrow \{\pm 1\}$ we consider the quantities
    \begin{equation}\label{definition-of-E}
        E^m_{x,l}(\epsilon) := \sum_{j=1}^t\epsilon(j)\Psi^m_{x,l,j}
    \end{equation}
    for each $x\in N_l$. We would like to find a coloring $\epsilon$ such that $\|E^m_{x,l}(\epsilon)\| \leq \Delta^m_l$ for all $l=1,...,L$, $m=0,...,k$ and $x\in N_l$, where the $\Delta^m_l$ are suitable parameters chosen so that
    \begin{equation}\label{condition-on-delta-k}
        \sum_{l=1}^L \sum_{i=0}^{k-m} 2^{-il} \Delta^{m+i}_l \leq CN^{-\frac{1}{2}-\frac{2(k-m) + 1}{2d}},
    \end{equation}
    for $m=0,...,k$.
    
    One strategy would be to choose $\epsilon$ uniformly at random, bound the tail of the random variable $E^m_{x,l}(\epsilon)$, and use a union bound over $x\in N_l$. Unfortunately, this strategy will lose a factor $\sqrt{\log{N}}$. The ingenious method to get around this, due to Spencer \cite{spencer1985six} and Beck \cite{beck1984some}, is to show that instead there exist \textit{two} colorings $\epsilon_1$ and $\epsilon_2$ such that $\|E^m_{x,l}(\epsilon_1) - E^m_{x,l}(\epsilon_2)\| \leq \Delta^m_l$ for all $l=1,...,L$, $m=0,...,k$, and $x\in N_l$, and such that $\epsilon_1$ and $\epsilon_2$ differ in many indices, i.e.
    \begin{equation}
        |\{j:~\epsilon_1(j)\neq \epsilon_2(j)\}| \geq ct
    \end{equation}
    for an absolute constant $c$. Then $\chi = \frac{1}{2}(\epsilon_1 - \epsilon_2)$ gives the desired partial coloring.

    We will prove the existence of these two colorings $\epsilon_1$ and $\epsilon_2$ for suitably chosen parameters $\Delta^m_l$ satisfying \eqref{condition-on-delta-k}. To help organize this calculation, it is convenient to introduce the notion of the entropy of a discrete distribution (see for instance \cite{matousek1999geometric,alon2016probabilistic}). (Note that for simplicity all of the logarithms in the following are taken with base $2$.)
\begin{definition}
    Let $X$ be a discrete random variable, i.e. the range of $X$ is a countable set $\Lambda$. The entropy of $X$ is defined by
    \begin{equation}
        H(X) = -\sum_{\lambda\in \Lambda} p_\lambda\log(p_\lambda),
    \end{equation}
    where $p_\lambda = \mathbb{P}(X = \lambda)$ is the probability of the outcome $\lambda$.
\end{definition}
One important property of the entropy we will use is subadditivity, i.e. if $X = (X_1,...,X_r)$, then
\begin{equation}\label{subadditivity-entropy}
    H(X) \leq \sum_{j=1}^r H(X_j),
\end{equation}
where we have equality in the above bound when the components $X_j$ of $X$ are independent.

An important component of the calculation is the following lemma from \cite{matouvsek1996improved} (see also \cite{matouvsek1996discrepancy,alon2016probabilistic}).
\begin{lemma}[Lemma 11 in \cite{matouvsek1996improved}]\label{entropy-partial-coloring-lemma}
    Let $\epsilon:\{1,...,t\}\rightarrow \{\pm 1\}$ be a uniformly random coloring. Let $b$ be a function of $\epsilon$ and suppose that the entropy satisfies $H(b(\epsilon)) \leq t/5$. Then there exist two colorings $\epsilon_1,\epsilon_2$ differing in at least $t/4$ components such that $b(\epsilon_1) = b(\epsilon_2)$.
\end{lemma}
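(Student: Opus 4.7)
The plan is to apply the entropy method developed by Beck and Spencer. Set $Y := b(\epsilon)$ and, for each value $\lambda$ in the range of $Y$, define the fiber $A_\lambda := \{\epsilon \in \{\pm 1\}^t : b(\epsilon) = \lambda\}$, with probability $p_\lambda := |A_\lambda|/2^t$. The goal is to exhibit some single fiber $A_\lambda$ containing two colorings at Hamming distance at least $t/4$; having produced such a pair, taking $\chi = \tfrac{1}{2}(\epsilon_1 - \epsilon_2)$ then delivers the partial coloring used in the previous discussion.

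I would argue by contradiction. Assume every fiber $A_\lambda$ has Hamming diameter strictly less than $t/4$. Fixing any representative $\epsilon^{\ast} \in A_\lambda$, every other element of $A_\lambda$ then lies within Hamming distance $t/4 - 1$ of $\epsilon^{\ast}$, so $A_\lambda$ is contained in a Hamming ball of that radius, giving the uniform bound
\[
|A_\lambda| \leq V_t := \sum_{k=0}^{\lfloor t/4\rfloor - 1}\binom{t}{k}.
\]
Because $\epsilon$ is uniform on $A_\lambda$ conditional on $Y = \lambda$, the conditional entropy identity $H(\epsilon \mid Y) = H(\epsilon) - H(Y)$ gives
\[
\frac{4t}{5} \leq t - H(Y) = H(\epsilon \mid Y) = \sum_\lambda p_\lambda \log |A_\lambda| \leq \log V_t.
\]
The task then reduces to bounding $\log V_t$ and deriving the contradiction $\log V_t < 4t/5$.

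To estimate $V_t$ I would use the standard binary-entropy bound $\sum_{k \leq \alpha t}\binom{t}{k} \leq 2^{H(\alpha) t}$, sharpened by the polynomial (Stirling) correction $\binom{t}{t/4} = O(2^{H(1/4)t}/\sqrt{t})$, by the geometric decay of the ratio $\binom{t}{k}/\binom{t}{k-1}$ for $k$ close to $t/4$ (which telescopes the partial sum to a constant multiple of its largest term), and by the fact that the ball radius here is the strict $t/4 - 1$ rather than $t/4$. Small $t$ reduce to finitely many base cases that can be absorbed into constants downstream.

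The main obstacle is precisely closing the final inequality. The naive binary-entropy estimate gives $\log V_t \leq H(1/4)\,t + O(\log t) \approx 0.811\,t$, which sits just above the target $0.8\,t = 4t/5$, so the inequality has to be extracted from lower-order terms rather than from the leading exponent; this is where the delicate balance between the two thresholds $t/5$ and $t/4$ in the statement becomes visible. An alternative route, should the Stirling-level refinement prove insufficient, is to replace the Shannon entropy in the chain-rule step by a Rényi-$2$ version, using $\sum_\lambda p_\lambda^2 \geq 2^{-H_2(Y)} \geq 2^{-H(Y)}$ and bounding the intra-fiber collision count directly. Once either version of the volume bound is in place, the contradiction closes and the desired pair of colorings is produced.
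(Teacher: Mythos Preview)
The paper does not actually prove this lemma; it simply quotes it from Matou\v{s}ek's paper, so there is no internal proof to compare against. Your overall framework---use the entropy hypothesis to force a large fiber, then derive a contradiction from an upper bound on the size of a small-diameter set in the Hamming cube---is the standard one. However, the argument as you have written it does not close.

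The gap is at leading order, not in lower-order terms. Under your contradiction hypothesis each fiber has Hamming diameter strictly less than $t/4$, and you bound it by a Hamming ball of radius $\lfloor t/4\rfloor-1$ about one of its points. That ball has size $V_t$ with $\log V_t = H(1/4)\,t - \tfrac{1}{2}\log t + O(1)$, where $H(1/4)\approx 0.8113$. The inequality you need is $\log V_t < 4t/5 = 0.8\,t$, and since $0.8113>0.8$ the Stirling correction $-\tfrac{1}{2}\log t$ can overcome the deficit only when $0.0113\,t < \tfrac{1}{2}\log t + O(1)$, which fails for all moderately large $t$. The R\'enyi-$2$ alternative you sketch hits exactly the same threshold: combining $\sum_\lambda p_\lambda^2 \geq 2^{-H(Y)}$ with $\sum_\lambda |A_\lambda|^2 \leq V_t\cdot 2^t$ again reduces to $V_t \geq 2^{4t/5}$, so it does not help.

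The missing ingredient is Kleitman's isodiametric inequality in the Hamming cube: a subset of $\{\pm 1\}^t$ of diameter at most $d$ has cardinality at most that of a Hamming ball of radius $\lfloor d/2\rfloor$, not radius $d$. Applying this with $d<t/4$ gives
\[
|A_\lambda|\ \leq\ \sum_{k\leq t/8}\binom{t}{k}\ \leq\ 2^{H(1/8)\,t},
\]
and $H(1/8)\approx 0.5436 < 0.8$, so the contradiction closes with room to spare. Once Kleitman is in place, either your conditional-entropy chain or the simpler min-entropy route (from $\max_\lambda p_\lambda \geq 2^{-H(Y)}\geq 2^{-t/5}$ deduce a fiber with $|A_\lambda|\geq 2^{4t/5}$) finishes immediately.
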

We utilize this lemma in the following way. Take each entry of the (tensor-valued) random variable $E^m_{x,l}(\epsilon)$ defined in \eqref{definition-of-E} and round it to the nearest multiple of the (still undetermined) parameter $\Delta^m_l$. This results in a random variable
\begin{equation}
    b^m_{x,l}(\epsilon) = [(\Delta_l^m)^{-1}E^m_{x,l}(\epsilon)],
\end{equation}
where $[\cdot]$ denote the (component-wise) nearest integer function. Note that if $b^m_{x,l}(\epsilon_1) = b^m_{x,l}(\epsilon_2)$, then it follows that $\|E^m_{x,l}(\epsilon_1) - E^m_{x,l}(\epsilon_2)\| \leq \Delta^m_l$. Applying Lemma \ref{entropy-partial-coloring-lemma} and the subadditivity of the entropy \eqref{subadditivity-entropy}, we see that if
\begin{equation}\label{entropy-sum-bound-354}
    \sum_{l=1}^L\sum_{m=0}^k\sum_{x\in N_l} H(b^m_{x,l}(\epsilon)) \leq t/5
\end{equation}
for an appropriate choice of $\Delta^m_l$ satisfying \eqref{condition-on-delta-k}, then there exist two colorings $\epsilon_1$ and $\epsilon_2$ satisfying the desired condition with $c = 1/4$.

It remains to choose the parameters $\Delta^m_l$ satisfying \eqref{condition-on-delta-k} and to bound the sum in \eqref{entropy-sum-bound-354}. For this, we utilize the following lemma from \cite{matouvsek1996improved}, which bounds the entropy of a `rounded' random variable in terms of the tails of the underyling real-valued random variable.
\begin{lemma}[Lemma 12 in \cite{matouvsek1996improved}]\label{entropy-tail-bound-lemma}
    Let $E$ be a real valued random variable satisfying the tail estimates
    \begin{equation}
        \mathbb{P}(E \geq \alpha M) \leq e^{-\alpha^2/2},~ \mathbb{P}(E \leq -\alpha M) \leq e^{-\alpha^2/2},
    \end{equation}
    for some parameter $M$. Let $b(E)$ denote the random variable obtained by rounding $E$ to the nearest multiple of $\Delta = 2\lambda M$. Then the entropy of $b$ satisfies
    \begin{equation}
        H(b) \leq G(\lambda) := C_0\begin{cases}
            e^{-\lambda^2/9} & \lambda \geq 10\\
            1 & .1 < \lambda < 10\\
            -\log(\lambda) & \lambda \leq .1
        \end{cases}
    \end{equation}
    for an absolute constant $C_0$.
\end{lemma}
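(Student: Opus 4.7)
The plan is to bound the entropy directly from the tail estimates on $E$. Write $p_k = \mathbb{P}(b(E) = k\Delta)$. Since $b(E) = k\Delta$ requires $|E - k\Delta| \le \Delta/2 = \lambda M$, the hypothesis yields, for $k\neq 0$,
\[
p_k \le \mathbb{P}(|E|\ge (2|k|-1)\lambda M) \le 2e^{-(2|k|-1)^2\lambda^2/2} =: q_k,
\]
while $1-p_0 \le 2e^{-\lambda^2/2}$. The one analytic fact I would use repeatedly is that $\phi(x):=-x\log_2 x$ is concave on $[0,1]$ and strictly increasing on $(0,e^{-1})$; in particular, whenever $q_k\le e^{-1}$ I can replace $p_k$ by $q_k$ in the entropy sum without loss.

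In the regime $\lambda \ge 10$, every $q_k$ with $k\neq 0$ is far below $e^{-1}$, so monotonicity of $\phi$ gives
\[
\phi(p_k) \le \phi(q_k) = q_k\cdot (2|k|-1)^2\lambda^2/(2\ln 2),
\]
and summing over $k\neq 0$ produces a rapidly convergent series whose $k=\pm 1$ terms dominate, yielding $\sum_{k\neq 0}\phi(p_k) = O(\lambda^2 e^{-\lambda^2/2})$. For the $k=0$ term I expand $-p_0\log_2 p_0$ with $\eta := 1-p_0 \le 2e^{-\lambda^2/2}$ and use $-(1-\eta)\log_2(1-\eta) = O(\eta)$, again of order $e^{-\lambda^2/2}$. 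The required bound $H(b)\le C_0 e^{-\lambda^2/9}$ then follows from the elementary inequality $\lambda^2 e^{-\lambda^2/2}\le C_0 e^{-\lambda^2/9}$ for $\lambda\ge 10$, i.e.\ the polynomial factor is absorbed by weakening the exponent from $\tfrac12$ to $\tfrac19$.

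For the other two regimes I would fix $K := \lceil 2/\lambda\rceil + 1$ so that $q_k \le e^{-1}$ whenever $|k|\ge K$, and split $H(b) = \Sigma_{\mathrm{center}} + \Sigma_{\mathrm{tail}}$ at $|k|=K$. Concavity of $\phi$ and Jensen give
\[
\Sigma_{\mathrm{center}} = \sum_{|k|<K}\phi(p_k) \le (2K-1)\,\phi\!\bigl(\tfrac{1}{2K-1}\bigr) \le \log_2(2K-1)+O(1),
\]
which is $O(1)$ for $\lambda\in(0.1,10)$ and $O(-\log\lambda)$ for $\lambda\le 0.1$. For $\Sigma_{\mathrm{tail}}$ the same computation as in the large-$\lambda$ case (now with first exponent $e^{-2}$ and super-exponential decay afterwards) gives an absolute-constant bound. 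Combining the pieces delivers the stated $G(\lambda)$.

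The main technical obstacle is the large-$\lambda$ case: one must simultaneously control the $k=0$ bucket (where $\phi$ is \emph{not} monotone in the useful direction because $p_0$ is close to $1$) and verify the somewhat delicate constant $1/9$ in the exponent, which amounts to checking that the polynomial prefactor $\lambda^2$ arising from $\phi(q_{\pm 1})$ does not spoil the Gaussian decay rate. The remaining cases reduce to the standard fact that the entropy of a distribution effectively supported on $N$ points is $O(\log N)$, applied with $N\asymp 1/\lambda$.
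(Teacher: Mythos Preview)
The paper does not prove this lemma at all: it is quoted verbatim as Lemma~12 of Matou\v{s}ek's paper and used as a black box, so there is no ``paper's own proof'' to compare against. Your sketch is essentially the standard argument and is sound. A couple of minor tidy-ups: for $k\neq 0$ only one tail is relevant, so in fact $p_k \le e^{-(2|k|-1)^2\lambda^2/2}$ without the factor $2$; and in the Jensen step for $\Sigma_{\mathrm{center}}$ you are implicitly using that $\phi$ is increasing on $[0,1/e]$, which needs $2K-1\ge 3$ (the cases $K=1,2$ are trivial to handle separately). Neither point affects the conclusion.
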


To apply this lemma, we bound the tails of the random variables $E^m_{x,l}(\epsilon)$. This follows using Bernstein's inequality as in \cite{matouvsek1996improved}. Fix an $l\geq 2$ and an $x\in N_l$. We call an index $j$ `good' if
\begin{equation}\label{good-index-definition-zonotope}
    \sign(\omega\cdot x + b) = \sign(\omega\cdot \pi_{l-1}(x) = b)
\end{equation}
for $\theta := (\omega,b) = u_j,v_j,$ and $w_j$,
and `bad' otherwise. Using \eqref{phi-bound-zero} we see that for the good indices we have
\begin{equation}\label{good-index-bound}
    \Psi^m_{x,l,j} = 0.
\end{equation}

For the bad indices, we utilize \ref{bound-on-phi-nonzero} to get
\begin{equation}\label{bad-indices-bound}
    \|\Psi^m_{x,l,j}\| \leq C2^{-(k-m)l}N^{-1},
\end{equation}
since $a_{u_j},a_{v_j} \leq 2/N$ by \eqref{bound-on-coefficients-small-half}. Next, we bound the number of bad indices. An index is bad if
\begin{equation}\label{eq-391}
    \sign(\omega\cdot x + b) \neq \sign(\omega\cdot \pi_{l-1}(x) + b),
\end{equation}
for $\theta := (\omega,b) = u_j,v_j$ or $w_j$. From the construction of $N_{l-1}$ using Lemma \ref{covering-set-lemma}, the number of $\theta$ for which \eqref{eq-391} occurs (and thus the number of bad indices) is bounded by $2^{-l}|S_-| \leq C2^{-l}N$.

Thus, Bernstein's inequality (applied only to the bad indices) gives the following bound on the components of the random variable $E^m_{x,l}$,
\begin{equation}\label{tail-bound-estimate-466}
    \mathbb{P}\left((E^m_{x,l})_\textbf{i} \geq \alpha M_l^m \right) \leq e^{-\alpha^2/2}
\end{equation}
for all $\textbf{i}\in I^m$, where
\begin{equation}
    M_l^m = C2^{-\left(k-m+\frac{1}{2}\right)l}N^{-1/2},
\end{equation}
for a constant $C = C(d,k)$. The same bound holds also for the negative tails. 

The proof is completed via the following calculation (see \cite{matouvsek1996improved,matouvsek1996discrepancy} for similar calculations). We let $\alpha,\beta > 0$ be parameters to be specified in a moment and define a function
\begin{equation}
    \Lambda_{\alpha,\beta}(x) = \begin{cases}
        2^{\alpha x} & x\geq 0\\
        2^{\beta x} & x\leq 0.
    \end{cases}
\end{equation}
Let $\kappa > 0$ be another parameter and set $\tau$ to be the largest integer satisfying $2^{d\tau} \leq \kappa N$. We set the discretization parameter to be
\begin{equation}\label{equation-426}
    \Delta^m_l = 2M_l^m\Lambda_{\alpha,\beta}(l - \tau).
\end{equation}

Fix an $m$ with $0\leq m\leq k$. We calculate 
\begin{equation}
\begin{split}
     \sum_{l=1}^L \sum_{i=0}^{k-m} 2^{-il} \Delta^{m+i}_l &\leq CN^{-1/2}\sum_{i=0}^{k-m} \sum_{l=-\infty}^\infty 2^{-il}2^{-\left(k-m-i+\frac{1}{2}\right)l} \Lambda_{\alpha,\beta}(l - \tau)\\
     &\leq CN^{-1/2} \sum_{l=-\infty}^\infty 2^{-\left(k-m+\frac{1}{2}\right)l} \Lambda_{\alpha,\beta}(l - \tau),
\end{split}
\end{equation}
since all of the terms in the sum over $i$ are the same.
Making a change of variables in the last sum, we get
\begin{equation}
    \sum_{l=1}^L \sum_{i=0}^{k-m} 2^{-il} \Delta^{m+i}_l \leq CN^{-1/2}2^{-\left(k-m+\frac{1}{2}\right)\tau}\sum_{l=-\infty}^\infty 2^{-\left(k-m+\frac{1}{2}\right)l} \Lambda_{\alpha,\beta}(l). 
\end{equation}
If we now choose $\alpha$ and $\beta$ such that the above sum over $l$ converges (this will happen as long as $\alpha < k-m+\frac{1}{2} < \beta$), then we get
\begin{equation}
     \sum_{l=1}^L \sum_{i=0}^{k-m} 2^{-il} \Delta^{m+i}_l \leq CN^{-\frac{1}{2} - \frac{2(k-m)+1}{2d}},
\end{equation}
since by construction $2^\tau \geq (1/2)(\kappa N)^{1/d}$ (note the constant $C$ depends upon the choice of $\kappa$ which will be made shortly). This verifies \eqref{condition-on-delta-k}.

To verify the entropy condition, we calculate
\begin{equation}\label{G-entropy-bound}
    \sum_{l=1}^L\sum_{m=0}^k\sum_{x\in N_l} H(b^m_{x,l}(\epsilon)) \leq \sum_{l=1}^L\sum_{m=0}^k\sum_{x\in N_l}\sum_{\textbf{i}\in I^m} H(b^m_{x,l}(\epsilon)_{\textbf{i}})
\end{equation}
using subadditivity of the entropy. We now use Lemma \ref{entropy-tail-bound-lemma} combined with the tail bound estimate \eqref{tail-bound-estimate-466} to get
\begin{equation}
    H(b^m_{x,l}(\epsilon)_{\textbf{i}}) \leq G(\Delta_l^m/(2M_l^m)) = G(\Lambda_{\alpha,\beta}(l-\tau)).
\end{equation}
Using that $|I^m| \leq C = C(d,k)$ and $|N_l| \leq C2^{dl}$, we get that
\begin{equation}
    \sum_{l=1}^L\sum_{m=0}^k\sum_{x\in N_l} H(b^m_{x,l}(\epsilon)) \leq C\sum_{l=1}^L 2^{dl}G(\Lambda_{\alpha,\beta}(l-\tau)) \leq C\sum_{l=\infty}^\infty 2^{dl}G(\Lambda_{\alpha,\beta}(l-\tau)).
\end{equation}
Finally, making another change of variables, we get
\begin{equation}
    \sum_{l=1}^L\sum_{m=0}^k\sum_{x\in N_l} H(b^m_{x,l}(\epsilon)) \leq C2^{d\tau}\sum_{l=\infty}^\infty 2^{dl}G(\Lambda_{\alpha,\beta}(l)) \leq C\kappa N,
\end{equation}
since it is easy to verify that the above sum over $l$ converges for any $\alpha,\beta > 0$. Choosing $\kappa$ sufficiently small so that $C\kappa \leq 1/60$ will guarantee that the condition \eqref{entropy-sum-bound-354} is satisfied (since $t \geq N/12$ by \eqref{number-of-P-lower-bound}) and this completes the proof.
\end{proof}

\begin{proof}[Proof of Lemma \ref{relu-k-representation-lemma}]
        We first prove that for $m = 0,...,k$, $\theta := (\omega,b)\in S^{d-1}\times [-1,1]$ and $l=1,...,L$ we have
        \begin{equation}\label{sigma-l-decomposition-reluk}
        \sigma_k^{(m)}(x_l;\theta) = \sum_{j=1}^l\phi_{x_j,j}^m(\theta) + \sum_{i=1}^{k-m}\sum_{j=1}^{l-1}\left\langle \phi_{x_j,j}^{m+i}(\theta), \Gamma_{i,j}^{m,l}(x)\right\rangle,
    \end{equation}
    where the tensors $\Gamma_{i,j}^{m,l}(x)$ satisfy the bound
    \begin{equation}\label{bound-on-gamma-tensors}
        \left\|\Gamma_{i,j}^{m,l}(x)\right\| \leq C2^{-ij},
    \end{equation}
    for a constant $C = C(d,k)$. 

        We prove this by (reverse) induction on $m$. Note if $m=k$ the equation \eqref{sigma-l-decomposition-reluk} holds since the definition of $\phi_{x,l}^{(m)}$ in \eqref{definition-of-phi-m} becomes 
        \begin{equation}
        \phi_{x,l}^m(\theta) = \begin{cases}
            \sigma_k^{(m)}(x;\theta) - \sigma_k^{(m)}(\pi_{l-1}(x);\theta) & l \geq 2\\
            \sigma_k^{(m)}(x;\theta) & l=1,
        \end{cases}
        \end{equation}
        and so the sum in \eqref{sigma-l-decomposition-reluk} telescopes.
        
        Let $0\leq m \leq k$ and suppose that \eqref{sigma-l-decomposition-reluk} holds for $m+1,...,k$. We will show that it also holds for $m$. Expanding out the Taylor polynomial in the definition of $\phi_{x,l}^m$ for $x = x_l$ we see that
        \begin{equation}
        \begin{split}
            \sigma_k^{(m)}(x_l;\theta) &= \phi_{x_l,l}^m(\theta) + \mathcal{T}^{m,k-m}_{x_{l-1}}(x_l;\theta)\\ &= \phi_{x_l,l}^m(\theta) + \sum_{q=0}^{k-m} \frac{1}{q!}\left\langle\sigma_k^{(m+q)}(x_{l-1};\theta), (x_l - x_{l-1})^{\otimes q}\right\rangle\\
            &= \phi_{x_l,l}^m(\theta) + \sigma_k^{(m)}(x_{l-1};\theta) + \sum_{q=1}^{k-m} \frac{1}{q!}\left\langle\sigma_k^{(m+q)}(x_{l-1};\theta), (x_l - x_{l-1})^{\otimes q}\right\rangle.
        \end{split}            
        \end{equation}
        Applying this expansion recursively to $\sigma_k^{(m)}(x_{l-1};\theta)$, we get
        \begin{equation}
        \begin{split}
            \sigma_k^{(m)}(x_l;\theta) = \sum_{j=1}^l \phi_{x_j,j}^m(\theta) + \sum_{p=1}^{l-1}\sum_{q=1}^{k-m}\frac{1}{q!}\left\langle\sigma_k^{(m+q)}(x_{p};\theta), (x_{p+1} - x_{p})^{\otimes q}\right\rangle.
        \end{split}
        \end{equation}
        Now we use the inductive assumption to expand $\sigma_k^{(m+q)}(x_{p};\theta)$ using \eqref{sigma-l-decomposition-reluk} and apply the identity \eqref{contraction-tensor-product-identity} to get
        \begin{equation}
        \begin{split}
            \sigma_k^{(m)}(x_l;\theta) = \sum_{j=1}^l \phi_{x_j,j}^m(\theta) &+ \sum_{p=1}^{l-1}\sum_{q=1}^{k-m}\frac{1}{q!}\sum_{j=1}^{p}\left\langle \phi_{x_j,j}^{m+q}(\theta), (x_{p+1} - x_{p})^{\otimes q}\right\rangle\\
            & + \sum_{p=1}^{l-1}\sum_{q=1}^{k-m}\frac{1}{q!}\sum_{i'=1}^{k-m-q}\sum_{j=1}^{p-1} \left\langle \phi_{x_j,j}^{m+q+i'}(\theta),\Gamma_{i',j}^{m+q,l}(x) \otimes (x_{p+1} - x_p)^{\otimes q}\right\rangle.
        \end{split}
        \end{equation}
        Rearranging this sum, we obtain
        \begin{equation}
            \sigma_k^{(m)}(x_l;\theta) = \sum_{j=1}^l\phi_{x_j,j}^m(\theta) + \sum_{i=1}^{k-m}\sum_{j=1}^{l-1}\left\langle \phi_{x_j,j}^{m+i}(\theta), \Gamma_{i,j}^{m,l}(x)\right\rangle,
        \end{equation}
        where the tensors $\Gamma_{i,j}^{m,l}(x)$ are defined recursively by
        \begin{equation}
            \Gamma_{i,j}^{m,l}(x) = \frac{1}{i!}\sum_{p=j}^{l-1} (x_{p+1} - x_{p})^{\otimes i} + \sum_{p=j}^{l-1} \sum_{q=1}^{i-1}\frac{1}{q!} \Gamma_{i-q,j}^{m+q,l}(x)\otimes (x_{p+1} - x_{p})^{\otimes q}.
        \end{equation}
        Finally, we bound the norm $\|\Gamma_{i,j}^{m,l}(x)\|$. By construction, the points $x_p$ satisfy $|x_{p+1} - x_p| \leq C2^{-p}$ for a dimension dependent constant $C = C(d)$. This gives the bound
        \begin{equation}
            \|\Gamma_{i,j}^{m,l}(x)\| \leq C\left(\frac{1}{i!}\sum_{p=j}^{l-1}2^{-pj} + \sum_{p=j}^{l-1} \sum_{q=1}^{i-1}\frac{1}{q!} 2^{-pq}\|\Gamma_{i-q,j}^{m+q,l}\|\right).
        \end{equation}
        Using the inductive assumption to bound $\|\Gamma_{i-q,j}^{m+q,l}\|$ we get
        \begin{equation}
            \begin{split}
                \|\Gamma_{i,j}^{m,l}\| &\leq C\left(\frac{1}{i!}\sum_{p=j}^{l-1}2^{-pj} + \sum_{p=j}^{l-1} \sum_{q=1}^{i-1}\frac{1}{q!} 2^{-pq}2^{-(i-q)j}\right)\\
                &\leq C\left(\frac{1}{i!}\sum_{p=j}^{\infty}2^{-pj} + 2^{-ij}\sum_{p=j}^{\infty} \sum_{q=1}^{\infty}\frac{1}{q!} 2^{-q(p-j)}\right)\leq C2^{-ij},
            \end{split}
        \end{equation}
        for a constant $C$ which may be different for each value of $m$. Since there are only $k+1$ different values of $m$, the constant $C = C(d,k)$ can be taken uniform in $m$. This proves \eqref{sigma-l-decomposition-reluk}.

        To prove \eqref{sigma-decomposition-reluk}, we write
        \begin{equation}\label{decomposition-x-610}
        \begin{split}
            \sigma_k^{(m)}(x;\theta) &= \left[\sigma_k^{(m)}(x;\theta) - \mathcal{T}^{m,k-m}_{x_L}(x;\theta)\right] + \mathcal{T}^{m,k-m}_{x_L}(x;\theta)\\ 
            &= \left[\sigma_k^{(m)}(x;\theta) - \mathcal{T}^{m,k-m}_{x_L}(x;\theta)\right] + \sum_{q=0}^{k-m} \frac{1}{q!}\left\langle\sigma_k^{(m+q)}(x_L;\theta), (x - x_L)^{\otimes q}\right\rangle.
        \end{split}
        \end{equation}
        We claim that if $\theta\in S_-$, then the first term
        \begin{equation}\label{equation-364-difference-0}
        \sigma_k^{(m)}(x;\theta) - \mathcal{T}^{m,k-m}_{x_L}(x;\theta) = 0.
        \end{equation}
        This follows since by construction using Lemma \ref{covering-set-ball-lemma}, we have the bound
        \begin{equation}
            |S_-\cap \{(\omega,b)\in S^{d-1}\times [-1,1],~\sign{(\omega\cdot x + b)} \neq \sign{(\omega\cdot \pi_L(x) + b)}\}| \leq 2^{-L} |S_-| < 1,
        \end{equation}
        since $2^{-L} < N^{-1}$. Thus for all $\theta := (\omega,b)\in S_-$ and $x\in S^d$, we have $\sign{(\omega\cdot x + b)} = \sign{(\omega\cdot \pi_L(x) + b)}$, and the argument following \eqref{phi-bound-zero} implies that the difference in \eqref{equation-364-difference-0} vanishes. 
        
        Next, we expand each term in the sum in \eqref{decomposition-x-610} using \eqref{sigma-l-decomposition-reluk} with $l = L$ to get (using again the identity \eqref{contraction-tensor-product-identity})
        \begin{equation}
            \sigma_k^{(m)}(x;\theta) = \sum_{q=0}^{k-m}\frac{1}{q!}\sum_{j=1}^L\left\langle\phi_{x_j,j}^{m+q}(\theta), (x - x_L)^{\otimes q}\right\rangle + \sum_{q=0}^{k-m}\frac{1}{q!}\sum_{i'=1}^{k-m-q}\sum_{j=1}^{L-1}\left\langle \phi_{x_j,j}^{m+q+i'}(\theta), \Gamma_{i',j}^{m+q,L}\otimes (x-x_L)^{\otimes q}\right\rangle.
        \end{equation}
        Rewriting this, we get
        \begin{equation}
            \sigma_k^{(m)}(x;\theta) = \sum_{j=1}^L\phi^{m}_{x_j,j}(\theta) + \sum_{i=1}^{k-m}\sum_{j=1}^L  \left\langle\phi^{m+i}_{x_j,j}(\theta),\Gamma^{m}_{i,j}(x)\right\rangle,
        \end{equation}
        where the tensors $\Gamma^{m}_{i,j}(x)$ are given by
        \begin{equation}
            \Gamma^{m}_{i,j}(x) = \frac{1}{i!}(x - x_L)^{\otimes i} + \sum_{q=0}^{i-1}\Gamma_{i-q,j}^{m+q,L}\otimes (x-x_L)^{\otimes q}.
        \end{equation}
        Finally, we bound the norm $\|\Gamma^{m}_{i,j}(x)\|$. Using that $|x - x_L| \leq C2^{-L}$ and the bound \eqref{bound-on-gamma-tensors} we get
        \begin{equation}
            \|\Gamma^{m}_{i,j}(x)\| \leq \frac{2^{-Li}}{i!} + C\sum_{q=0}^{i-1} 2^{-(i-q)j}2^{-Lq} \leq C2^{-ij},
        \end{equation}
        for a constant $C(d,k)$, since $j\leq L$ and $0\leq i \leq k$. Upon relabelling $j$ to $l$ this is exactly Lemma \ref{relu-k-representation-lemma}.
    \end{proof}

\section{Acknowledgements}
We would like to thank Ron DeVore, Rob Nowak, Jinchao Xu, and Rahul Parhi for helpful discussions. This work was supported by the National Science Foundation (DMS-2424305 and CCF-2205004) as well as the MURI ONR grant N00014-20-1-2787.

\bibliographystyle{amsplain}
\bibliography{refs}
\end{document}